\documentclass{article}

\usepackage[preprint]{neurips_2026}

\usepackage[utf8]{inputenc}
\usepackage[T1]{fontenc}
\usepackage{microtype}
\usepackage{graphicx}
\usepackage{subcaption}
\usepackage{booktabs}
\usepackage{multirow}
\usepackage{wrapfig}
\usepackage{caption}
\usepackage{hyperref}
\usepackage{url}
\usepackage{algorithm}
\usepackage{algorithmic}

\usepackage{amsmath}
\usepackage{amssymb}
\usepackage{amsfonts}
\usepackage{mathtools}
\usepackage{amsthm}

\usepackage[capitalize,noabbrev]{cleveref}

\theoremstyle{plain}
\newtheorem{theorem}{Theorem}[section]
\newtheorem{proposition}[theorem]{Proposition}

\theoremstyle{definition}

\theoremstyle{remark}

\usepackage[textsize=tiny]{todonotes}

\usepackage{tabularx}
\usepackage[table]{xcolor}
\usepackage[most]{tcolorbox}
\usepackage{enumitem}
\usepackage{xurl}
\usepackage{nicefrac}

\usepackage{tikz}

\definecolor{darkgreen}{rgb}{0.0, 0.5, 0.0}

\newcommand{\passat}[1]{\text{pass@}#1}
\newcommand{\Passat}[1]{\text{Pass@}#1}

\title{Learning How to Cube}

\author{%
  Ferhat Erata$^{1,2}$\thanks{Equal contribution.} \quad
  Sam Kouteili$^{1}$\footnotemark[1] \quad
  Thanos Typaldos$^{1}$\footnotemark[1] \\[0.5em]
  \textbf{Timos Antonopoulos}$^{1}$ \quad
  \textbf{Robert B. Jones}$^{2}$ \quad
  \textbf{Byron Cook}$^{2}$ \quad
  \textbf{Ruzica Piskac}$^{1}$ \\[0.8em]
  $^{1}$Yale University \quad $^{2}$AWS Agentic AI
}

\begin{document}

\maketitle

\begin{abstract}
  Despite the effectiveness of Cube-and-Conquer (C\&C) for solving challenging Boolean Satisfiability (SAT) problems, no prior work has shown that transformer-based models can learn effective cubing heuristics. We introduce a neuro-symbolic post-training framework for this task. We design an MCTS-based data curation pipeline that uses symbolic heuristics to explore splitting decisions over SAT competition formulas, producing preference data grounded in solver statistics and augmented with reasoning traces from a teacher model. Our two-stage post-training, supervised fine-tuning (SFT) followed by direct preference optimization (DPO), enables a 4B-parameter model to achieve a \passat{5} score of 53 on 100 SAT competition benchmarks, surpassing frontier LLMs such as Claude-Sonnet-4 (50) and matching the best symbolic heuristic (53). Ablations show that SFT alone improves \passat{5} from 46 to 51, with DPO adding 2 additional benchmarks; an entropy/agreement ablation on realized first-cube decisions further shows that SFT, not DPO, accounts for the root-level decision diversity that produces complementary per-run coverage over deterministic symbolic methods. This demonstrates that transformers can be trained to make effective cubing decisions in a domain traditionally dominated by symbolic methods.
\end{abstract}

\section{Introduction}\label{sec:intro}

Propositional satisfiability (SAT) is a foundational computational problem with widespread applications in formal verification, program analysis, planning, and artificial intelligence~\citep{kautz1992planning,prasad2005survey, ye2023satlm}. Given a Boolean formula in conjunctive normal form (CNF), the SAT problem asks whether there exists a variable assignment that satisfies all clauses. This problem is NP-complete, with several heuristic-guided algorithms devised to solve problem classes~\citep{moskewicz2001chaff}.

\begin{figure*}[t]
    \centering
    \begin{minipage}[t]{0.44\linewidth}
        \vspace{0pt}
        \includegraphics[width=0.95\linewidth]{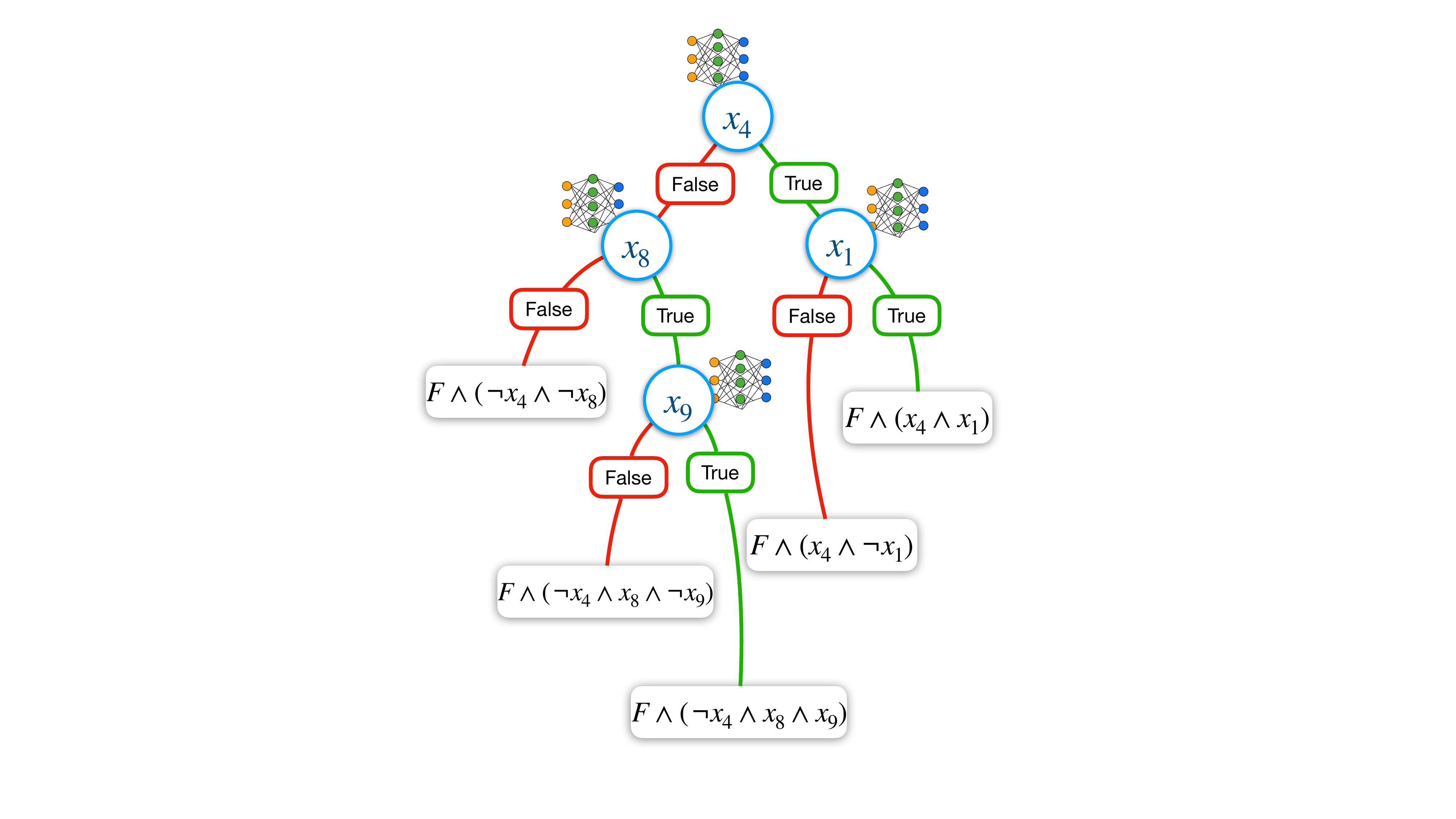}
        \label{fig:neural-heuristic}
    \end{minipage}
    \hfill
    \begin{minipage}[t]{0.55\textwidth}
        \vspace{-9pt}
        \scalebox{0.88}{%
        \begin{minipage}{1.1\textwidth}
        \begin{algorithm}[H]
        \caption{Cube-and-Conquer}
        \label{alg:cube_and_conquer}
        \begin{algorithmic}[1]
            \STATE {\bfseries Input:} {node $n$, heuristic $heur$}
            \STATE {\bfseries Output:} {Run Statistics}
            \STATE $(v, \neg v) \gets$ \textsc{Choose-Split}($n,\, heur$) \\
            \STATE $(cn_1, cn_2) \gets$ \textsc{Create-Children}($n,\, (v, \neg v)$) \\
            \STATE $stats_1 \gets$ \textsc{Rollout($cn_1$)} \COMMENT{\textit{Process child 1}} \\
            \IF{$stats_1.sat\_status$ = \small{UNKNOWN}}
                \STATE $cube\_stats_1$ $\gets$ \textsc{Cube-and-Conquer}($cn_1,\, heur$) \\
                \STATE \small{merge}($stats_1,\, cube\_stats_1$)
            \ENDIF
            \STATE \textsc{Return-If-Timeout}($cube\_stats\_1$)
            \STATE $stats_2 \gets \emptyset$ \COMMENT{\textit{Process child 2}} \\
            \IF{$stats_1.sat\_status \neq$ \small{SAT}}
                \STATE $stats_2 \gets$ \textsc{Rollout($cn_2$)} \\
                \IF{$stats_2.sat\_status$ = \small{UNKNOWN}}
                    \STATE $cube\_stats_2$ $\gets$ \textsc{Cube-and-Conquer}($cn_2, heur$) \\
                    \STATE \small{merge}($stats_2,\, cube\_stats_2$)
                \ENDIF
            \ENDIF
            \STATE $cube\_stats \gets$ \small{combine}($stats_1,\, stats_2$) \COMMENT{\textit{Combine stats}}  \\
            \STATE \textbf{return} $cube\_stats$
        \end{algorithmic}
        \end{algorithm}
        \end{minipage}
        }
    \end{minipage}
    \caption{(a) Schematic depiction of Cube-and-Conquer using a neural heuristic, (b) Our implementation of the evaluation algorithm with 30 minutes timeout. The cubing phase is represented by \textsc{Choose-Split} and the conquering phase by \textsc{Rollout}, which is trying to solve the node's formula using a CDCL solver with 5 seconds timeout.} \label{fig:neural-heuristic-and-cube-and-conquer}
\end{figure*}

The Cube-and-Conquer (C\&C) method has emerged as a prominent decision procedure particularly suited for complex SAT problem instances~\citep{heule2011cube}. Cube-and-Conquer divides the SAT-solving process into two phases: a \emph{cubing phase}, which partitions the original formula into independent subproblems, or ``cubes'', followed by a \emph{conquer phase}, wherein each cube is solved individually with parallel Conflict-Driven Clause Learning (CDCL) solvers~\citep{biere2009handbook}. 
The effectiveness of C\&C critically depends on the quality of the cubing heuristic, with a good heuristic producing cubes that simplify each subproblem and minimize redundant computational effort.
Prior work on learned SAT heuristics has focused on CDCL-style branching~\citep{liang2016learning,liang2017empirical} rather than C\&C cubing; no transformer-based architecture has been shown to learn effective cubing heuristics.
In this work, we propose a {\it neuro-symbolic} post-training framework that enables transformers to choose splitting variables for Cube-and-Conquer. \Cref{fig:neural-heuristic-and-cube-and-conquer} shows the schematic alongside our algorithm for dynamic cubing: the model's decisions define the cubes that are subsequently solved during the conquering phase.
We first apply Supervised Fine Tuning (SFT) with teacher-generated data, then refine via Direct Preference Optimization (DPO)~\citep{rafailov2023direct} on preference pairs produced by Monte Carlo Tree Search (MCTS) over SAT competition problems, augmented with teacher reasoning traces.

We evaluate on 100 held-out hard SAT competition benchmarks against state-of-the-art symbolic heuristics and frontier LLM baselines. Our 4B-parameter post-trained model surpasses frontier LLMs and matches the best symbolic heuristic at \passat{5} of 53. Ablations show SFT raises the base model from 46 to 51 and DPO reaches 53, illustrating how neural and symbolic signals complement each other.

Our contributions are:
(1)~The first neuro-symbolic post-training framework enabling transformers to learn cubing heuristics for C\&C, achieving performance competitive with the best symbolic heuristics;
(2)~An MCTS-based data curation pipeline producing preference data grounded in solver statistics, with a novel bounded reward function and reasoning traces distilled from a teacher model;
(3)~A comprehensive evaluation on 100 SAT competition benchmarks with ablations isolating SFT and DPO contributions and analysis of per-run solution diversity;
(4)~A mechanistic analysis showing that SFT, not DPO, accounts for the root-level decision diversity gap that underlies our model's \passat{5} advantage over deterministic symbolic heuristics.

\section{Related Work}\label{sec:related_work}

Several works have explored neural approaches to SAT solving. NeuroSAT~\citep{selsam2018learning} demonstrated that graph neural networks could learn satisfiability patterns, while NeuroCore~\citep{selsam2019guiding} used unsat-core predictions to guide CDCL solvers. Learning-based branching heuristics have shown promise, with LRB~\citep{liang2016learning} and its variants~\citep{liang2017empirical} using reinforcement learning to adapt branching decisions online. More recently, Monte Carlo Forest Search (MCFS)~\citep{cameron2024unsat} has been applied to synthesize UNSAT solver heuristics, and AlphaMapleSAT~\citep{jha2024alphamaplesat} used MCTS-based lookahead for cubing in C\&C, though with a non-transformer policy. Although transformers have shown reasoning capabilities on mathematical problems~\citep{hendrycks2021measuring}, direct investigation of transformer-based SAT reasoning~\citep{pan2024can} found significant limitations. No prior work has demonstrated that transformer-based architectures can learn effective cubing heuristics for C\&C. In a parallel line of work, \citet{zhai2025learning} apply deep reinforcement learning to splitting heuristics in Divide-and-Conquer SAT using a GNN policy trained from scratch; our approach differs in architecture (transformer vs GNN), paradigm (post-training on preference data vs on-policy RL), and cubing style (decisions interleaved with solver rollouts vs a fixed tree built up front).
Complementary to single-heuristic approaches, portfolio SAT solvers~\citep{xu2008satzilla,zhang2024grass} combine multiple solvers with different strengths; our run-to-run diversity (\cref{sec:results}) makes the trained model a natural candidate for such portfolios.

Our training methodology builds on recent advances in RL for LLMs. Direct Preference Optimization (DPO)~\citep{rafailov2023direct} has emerged as an effective alternative to RLHF for aligning model outputs with preferences. MCTS has been successfully applied to enhance LLM reasoning in domains like mathematics~\citep{zhang2025llama} and code generation. The combination of MCTS with preference learning has shown particular promise for problems requiring strategic lookahead~\citep{snell2024scaling}. We adapt these techniques to the SAT domain, using MCTS to explore the space of cubing decisions and generate preference data with a novel reward grounded in solver statistics.

\section{Preliminaries}\label{sec:background}

\noindent\textbf{Satisfiability}
The Boolean satisfiability problem (SAT) asks whether a given propositional formula admits a variable assignment of boolean values that evaluates it to true. We consider formulas expressed in conjunctive normal form (CNF), noting this as the standard in satisfiability solving domains. A CNF formula $\varphi$ is defined as a conjunction of clauses, each clause being a disjunction of literals. A literal $\ell$ is either a propositional variable $x$ (a positive literal) or its negation $\neg x$ (a negative literal). An assignment $\alpha$ maps each variable to a Boolean value $\{0,1\}$, corresponding to \emph{false} and \emph{true}. A positive literal $x$ is satisfied under $\alpha$ if $\alpha(x) = 1$, while a negative literal $\neg x$ is satisfied if $\alpha(x) = 0$. Conversely, a literal is falsified when these conditions are not met. A clause is satisfied if at least one of its literals is satisfied, and a CNF formula is satisfied if every clause is satisfied. A formula is \emph{satisfiable} if such an assignment exists, and \emph{unsatisfiable} otherwise.

\textbf{Conflict-Driven Clause Learning.}
Modern SAT solvers are typically based on Conflict-Driven Clause Learning (CDCL)~\citep{moskewicz2001chaff}. CDCL incrementally builds a partial assignment by branching on variables, then applying unit propagation to enforce implied assignments. If a conflicting assignment is detected, the solver analyzes it to derive a new clause that guides future search. This loop continues until either a satisfying assignment is found or a contradiction proves the formula unsatisfiable.

\textbf{Cube-and-Conquer.} Cube-and-Conquer (C\&C) is a SAT-solving paradigm designed for solving  complex instances~\citep{heule2011cube}. C\&C solvers introduce a partitioning step that breaks a SAT problem into smaller subproblems. The partitioning is achieved by selecting a sequence of \emph{splitting variables}, whose assignments define disjoint subformulas, or \emph{cubes}. Each splitting variable constructs two sub-formulas, simplifying for the positive and negative literal case. Each cube can then be solved independently by a conventional CDCL solver, typically in parallel. The efficiency of C\&C depends almost entirely on the quality of these splitting choices: good variables yield balanced and simplified cubes, while poor choices lead to redundant or disproportionately hard subproblems. Designing effective cubing heuristics is a central challenge for C\&C; a wide range of symbolic heuristics have been proposed for this task (formal definitions in \cref{subsec:symbolic-heuristics}), but they remain hand-designed.
\section{Methodology}\label{sec:training-methodology}

Our downstream objective is to train a small language model to act as a cubing heuristic in C\&C, through two post-training stages. In the first stage, \emph{Supervised Fine-Tuning (SFT)} uses a dataset constructed by evaluating a teacher model on the training CNFs and recording its answers. In the second stage, \emph{Direct Preference Optimization (DPO)} consumes preference pairs generated by a \emph{Monte Carlo Tree Search (MCTS)} framework that systematically explores candidate splits (a mix of heuristic-driven and randomly sampled) and scores the resulting intermediate solver states against our reward function (\cref{eq:reward}). This yields a preference dataset covering a broader, more stateful distribution of cubing trajectories than CDCL solver trajectories alone; we further augment each DPO decision with a teacher-generated reasoning trace, pairing solver-grounded decisions with neural reasoning.

\textbf{Formalization.} We formalize the cubing problem as a Markov Decision Process $\mathcal{M} = (\mathcal{S}, \mathcal{A}, T, R, \gamma)$~\citep{russell1995modern}: states $s \in \mathcal{S}$ are CNF formulas arising during C\&C, actions $\mathcal{A}(s) = \text{Vars}(\phi)$ are unassigned variables, and transitions are deterministic, i.e., the simplified subformulas are uniquely determined by the variable choice: selecting variable $v$ in state $s$ (formula $\phi$) produces two child states $T(s, v) = (\text{simplify}(\phi \land v),\, \text{simplify}(\phi \land \neg v))$. The reward $R$ is defined in \cref{eq:reward} and satisfies boundedness ($R \in (0,1]$), monotonicity (increasing in unit propagations and eliminated clauses, decreasing in decisions and conflicts), and limiting behavior ($R=1$ when immediately solved, $R \to 0$ as difficulty grows). Full proofs are in \cref{app:theoretical}.

This MDP formulation connects naturally to DPO~\citep{rafailov2023direct}: the reward function (\cref{eq:reward}) scores individual splits, the cube scoring function (\cref{eq:score}) ranks candidate pairs by combining rewards from both branches, and DPO uses these rankings as preferences. Concretely, given prompt $x$ (a CNF formula) and two candidate responses $y_w, y_l$ (reasoning traces with cube decisions), the preference $y_w \succ y_l$ is determined by the scoring function, which favors balanced, high-quality splits. DPO then learns a policy $\pi_\theta$ that assigns higher likelihood to preferred decisions without explicit reward modeling.

\subsection{Data Collection}\label{subsec:data-collection}

Training our pipeline requires access to benchmarks that serve as ground truth for reinforcement learning. 
We draw problems from the SAT Competition benchmark pool~\citep{heule2024proceedings}, which serves as the standard evaluation set for modern SAT solvers. To obtain diverse and challenging ones, we used the Global Benchmark Database~\citep{iser2024global} and selected formulas with up to 600 variables and 1000 clauses, constrained by context window limits.
We collected 94 SAT and 314 UNSAT formulas, sorted by normal CDCL solving time with a two-minute cutoff. For each of the two categories we reserved for the SFT training 42 easy formulas that were solved by a CDCL solver under 0.4 seconds and 2 hard ones. For the basis of our tree-based augmentation, we filtered 30 easy formulas from the training dataset for each category. For evaluation, we constructed a balanced test set of 50 SAT and 50 UNSAT formulas not used in training. Our test set mixes easy and hard benchmarks, with an emphasis on harder instances unsolved within the cutoff to mimic Cube-and-Conquer applications. More information on our test set difficulty spread can be found in \Cref{fig:benchmark-difficulty-cdf}.
The test set spans ten instance families appropriate for Cube-and-Conquer's typical use case: hard random 3-SAT (\emph{sgen}, \emph{hgen}, Barthel), XOR / modular-arithmetic, proof-complexity-hard constraint graphs (Urquhart, Margulis), combinatorial puzzles (pigeonhole, Battleship), graph-encoded CSPs, cryptographic Hidden-Weighted-Bit circuits, and ISCAS-like circuits. Instances are drawn from the Global Benchmark Database~\citep{iser2024global}, spanning SAT Competitions from 2003 through 2025, under the size constraint (${\leq}\,600$ variables, ${\leq}\,1000$ clauses) imposed by the model's context window; the full per-family breakdown is in \cref{tab:benchmark-families}. Many of these instances are hard for standalone CDCL: under glucose v4.2 with a 24-hour timeout, $34\%$ of our test benchmarks remain unsolved (\cref{fig:benchmark-difficulty-cdf}), which is the regime where Cube-and-Conquer is most useful.
Although this training dataset provides competition-grade SAT instances, it is not large enough to sustain reinforcement learning. Moreover, to train a model that performs reliably across different points of the solving process, SFT and DPO both must be supplied with traces that expose a wide variety of solver states, rather than only the first cubing decision. 
Thus, the effectiveness of training depends on grounding these signals in trajectories that reflect how a solver actually branches.
To enable this, we next introduce our MCTS-based solver-state space simulation process, which allows us to generate diverse trajectories of cubing decisions and attach well-defined decision rewards. 

\subsection{Decision Space Exploration}

To explore solver decision space, we employ Monte Carlo Tree Search (MCTS), a simulation-based algorithm widely used to navigate large discrete search spaces in domains such as planning and games. In our setting, a CNF formula can be viewed as the initial state of a search, with candidate variable assignments producing new states in the tree. 
Here, the root node corresponds to the original CNF, and each child node represents the formula obtained by applying a splitting decision assignment to its parent. 
This way, paths through the tree represent trajectories of cubing decisions.
We now describe the four stages of MCTS as instantiated in our formulation.

\textbf{Expansion.}
Given a parent node containing a CNF formula state, we expand children nodes by considering cube decision candidates. As it is intractable to evaluate every possible variable, we instead select up to six candidate variables, each expanded with both polarities, yielding at most twelve children per parent. Candidate variables are chosen from existing symbolic heuristics implemented in Z3 \citep{de2008z3}, namely \emph{march\_cu}, \emph{ternary}, \emph{heuleu}, \emph{heule\_schur}, and \emph{unit\_prop}, which often overlap to provide fewer than six unique proposals. To maintain diversity, we supplement these with randomly chosen variables until six candidates are obtained. Each resulting child node stores the reduced CNF formula after the variable assignment, together with relevant solver statistics. 

The \texttt{ternary} heuristic is a reward function optimized for random 3-SAT instances, originally used by Heule and Knuth in the March solver. It focuses on the structure of ternary (3-literal) clauses to guide splitting decisions.
The \texttt{heule\_schur} heuristic is based on the ``Schur Number 5'' work by~\citep{heule2018schur}. The Heule-Unit (\texttt{heuleu}) is a simplified variant focusing purely on clause sizes. Their scoring functions for a literal $\ell$ over clauses $\mathcal{C}$ are ($\text{occs}(\ell)$ denotes the number of clauses containing $\ell$):
\begin{equation}
\text{score}_{\text{hs}}(\ell) = \sum_{C \ni \ell} 2^{-|C|+1} \cdot \frac{\textstyle\sum_{\ell' \in C \setminus \{\ell\}} \text{occs}(\neg \ell')}{|C|}, \qquad
\text{score}_{\text{hu}}(\ell) = \sum_{C \ni \ell} 2^{-|C|+1}
\end{equation}
The \texttt{unit} heuristic extends \texttt{heule\_schur} by additionally counting the number of unit clauses that would be created by the assignment. This provides additional weight to decisions that lead to immediate simplifications through unit propagation.
The \texttt{march\_cu} heuristic is the default reward function used in a version of the March solver. It represents a well-established baseline that has been effective across various SAT problem types. Full definitions of all symbolic heuristics are provided in \cref{subsec:symbolic-heuristics}.

\textbf{Selection.}
The selection phase chooses the child node that maximizes the standard UCT score~\citep{kocsis2006bandit,browne2012survey}, balancing exploitation (mean reward) and exploration (visit counts) with exploration constant $C = 2$. The selected child node then undergoes simulation.

\textbf{Simulation.}
The simulation phase, also known as rollout, calculates the value reward of the current leaf node, reflecting a signal from making a certain decision given a parent state. In our context, we need to define a reward that captures a splitting literal's efficacy in reducing and eventually solving the formula. We consider several symbolic statistics and present our custom reward in \Cref{eq:reward}:
\begin{equation}~\label{eq:reward}
   R_{node} = \frac{\log(1 + \varepsilon + u + e)}{\log(1 + \varepsilon + u + e + d + c)} \in (0,1]
\end{equation}
With a set of splitting variables, we create child nodes by applying these literal assignments to the parent CNF formula. This results in simplified CNF states, with implicitly propagated variable assignments and eliminated clauses. Intuitively, greater reductions lead us closer to solving the formula. In \cref{eq:reward}, $u$ denotes the number of unit propagations (variables implicitly propagated with no other satisfiable assignment) resulting from CNF reduction, while $e$ represents the number of eliminated clauses after literal assignment and subsequent unit propagation.
Given simplified CNF problems, we proceed to rollout using a CDCL solver with a timeout of 1 second. In doing so, we collate statistics measuring the difficulty of the CNF states. We specifically gather the number of splitting decisions $d$ and the number of conflict clauses $c$ generated by the solver, noting them as proxies for hardness. We consider these symbolic markers over pure solve time, which can be inconsistent and highly variant over runs. Our reward function satisfies nice properties such as \textit{boundedness} ($R \in (0, 1]$) and \textit{monotonicity} with respect to variables (formalisms in \cref{app:theoretical}). The $\varepsilon > 0$ term also ensures numerical stability by preventing $\log(1) = 0$ in degenerate cases.

\textbf{Backpropagation.}
After simulation, we propagate the current child's reward up to the root of the tree, updating the MCTS value of each node along the path to inform future selection decisions.

\subsection{Dataset Creation}

Applying our MCTS formulation to 60 CNF problems, we build statefully-rich trees that simulate intermediary states of solver trajectories. We now examine how we transcribe these search trees as datasets for the training that requires contrastive pairs.
Specifically, DPO requires a training dataset with tuples of three elements $(prompt, chosen, rejected)$, with $prompt$ referring to a CNF problem and a call to produce a cubing decision.
We transcribe our tree by breath-first-search traversal. For each parent node, we collect the 12 children nodes and pair the positive-negative literals for 6 cubing decision candidates. The score of a cubing decision is computed as: 
\begin{equation}~\label{eq:score}
score(l, \neg l) = (R_l \times R_{\neg l}) + (R_l + R_{\neg l}) \in (0,3]    
\end{equation}
The multiplicative term rewards balanced cubes where both branches simplify well, while the additive term credits individually strong branches.
We take the minimal and maximal scoring pairs and add them to our dataset as \emph{chosen} and \emph{rejected} entries, respectively.

\subsection{Reasoning Distillation}

While our symbolic preferences ground the model's outputs, training on decisions alone provides no reasoning signal for the model to generalize from. Moreover, without reasoning traces in the output, it would be impossible to analyze what heuristic strategies the model employs (\cref{sec:results}).
To address this, we augment our DPO preference dataset with distilled reasoning. For each chosen and rejected decision, we query a teacher model to generate an explanatory trace, without revealing whether the decision was marked as chosen or rejected. 
The teacher model reasoning is presented as a reasoning trace, enclosed in \verb|<reasoning>| tags, prepended to the decision in \verb|<answer>| tags.

\begin{table}[t]
    \centering
    \footnotesize
    \setlength{\tabcolsep}{4pt}
    \caption{\Passat{k} scores (unique benchmarks solved across $k$ attempts), per-run statistics, and satisfiability bias over 5 runs. \emph{Per-run avg}: SAT/50, UNSAT/50, $\mu$=mean total/100, $\sigma$=std dev. \emph{Satisfiability bias}: $\Delta$=SAT$-$UNSAT, $p$=paired t-test (\textbf{bold} if $p<0.05$), direction of significant bias. Full per-run breakdown in \cref{tab:extended-success-statistics}; details in \cref{subsec:sat-bias}.}
    \label{tab:pass_at_k_scores}
    \vspace{0.5em}
    \begin{tabular}{@{}llccccc@{\hspace{18pt}}cccc@{\hspace{18pt}}rrl@{}}
        \toprule
        & & \multicolumn{5}{c}{\textbf{Pass@k (unique)}} & \multicolumn{4}{c}{\textbf{Per-run avg (5 runs)}} & \multicolumn{3}{c}{\textbf{Satisfiability bias}} \\
        \cmidrule(l{2pt}r{14pt}){3-7} \cmidrule(l{2pt}r{14pt}){8-11} \cmidrule(l{2pt}r{2pt}){12-14}
        & \textbf{Heuristic} & \textbf{@1} & \textbf{@2} & \textbf{@3} & \textbf{@4} & \textbf{@5} & \textbf{SAT} & \textbf{UNSAT} & $\boldsymbol{\mu}$ & $\boldsymbol{\sigma}$ & $\boldsymbol{\Delta}$ & $\boldsymbol{p}$ & \textbf{Bias} \\
        \midrule
        \multirow{5}{*}{\rotatebox[origin=c]{90}{\scriptsize\textit{LLMs}}}
        & Qwen3-4B                  & 40 & 46 & 46 & 46 & 46 & 21.2 & 21.6 & 42.8 & 1.9 & $-0.4$ & .689 & none \\
        & GPT-OSS-120B              & 43 & 45 & 45 & 46 & 47 & 20.8 & 22.0 & 42.8 & 0.8 & $-1.2$ & \textbf{.033} & UNSAT \\
        & Claude-3.7-Sonnet         & 44 & 48 & 48 & 48 & 48 & 22.0 & 22.6 & 44.6 & 0.5 & $-0.6$ & .426 & none \\
        & Qwen3-32B                 & 46 & 48 & 49 & 49 & 49 & 21.0 & 22.0 & 43.0 & 2.4 & $-1.0$ & .089 & none \\
        & Claude-Sonnet-4           & 47 & 50 & 50 & 50 & 50 & 23.8 & 23.0 & 46.8 & 1.5 & $+0.8$ & .294 & none \\
        \midrule
        \multirow{3}{*}{\rotatebox[origin=c]{90}{\scriptsize\textit{Symbolic}}}
        & heule\_schur              & 50 & 52 & 52 & 52 & 52 & 26.4 & 25.0 & 51.4 & 0.9 & $+1.4$ & \textbf{.025} & SAT \\
        & march\_cu                 & 51 & 52 & 52 & 52 & 52 & 26.6 & 25.0 & 51.6 & 0.5 & $+1.6$ & \textbf{.003} & SAT \\
        & unit                      & \textbf{52} & \textbf{53} & \textbf{53} & \textbf{53} & \textbf{53} & 26.4 & 25.2 & 51.6 & 0.5 & $+1.2$ & \textbf{.033} & SAT \\
        \midrule
        \multirow{3}{*}{\rotatebox[origin=c]{90}{\scriptsize\textit{Ours}}}
        & Qwen3-4B-DPO              & 42 & 45 & 45 & 46 & 47 & 19.8 & 21.8 & 41.6 & 1.1 & $-2.0$ & \textbf{.003} & UNSAT \\
        & Qwen3-4B-SFT              & \textbf{49} & 49 & 50 & 51 & 51 & 22.6 & 23.4 & 46.0 & 2.3 & $-0.8$ & .242 & none \\
        & \textbf{Qwen3-4B-SFT-DPO} & 46 & \textbf{51} & \textbf{51} & \textbf{53} & \textbf{53} & 23.8 & 23.6 & \textbf{47.4} & 0.9 & $+0.2$ & .778 & none \\
        \bottomrule
    \end{tabular}
\end{table}

\section{Results}\label{sec:results}

\textbf{Model.}
We use Qwen3-4B~\citep{yang2025qwen3} as our base model, with GPT-OSS-120b~\citep{gpt-oss-models} as our reasoning distillation teacher model.
While our main trained model is Qwen3-4B-SFT-DPO, we trained the based model with SFT only (Qwen3-4B-SFT) and with DPO only (Qwen3-4B-DPO) for ablation analysis.
 
\textbf{Datasets.}
Our SFT dataset, which contains 4332 samples, was created by evaluating GPT-OSS-120B on the 88 CNFs described in \cref{subsec:data-collection}. For the DPO preference dataset, we run 1000 MCTS iterations per formula over 60 training CNFs, we flattened the tree and collected the preference pairs. Then we prompted GPT-OSS-120B to generate reasoning traces for each decision. To avoid out-of-memory issues during training, we filtered the resulting dataset keeping rows that could construct a prompt of maximum 8000 tokens. This way, the final dataset contained 9012 rows.
 
\textbf{Hardware and Training.}
We perform distributed training on machines with 8 NVIDIA H100 80GB GPUs and 96 CPUs. We run the evaluation using one H100 or one NVIDIA A100 80G GPU, and 32 CPUs. For training, we used transformers~\citep{transformers-library}, transformers-reinforcement-learning~\citep{trl} and vLLM~\citep{vllm}, which we also use for inference. All LLMs use identical decoding configuration (same temperature and top-$p$ settings on the same vLLM server), so any across-model differences reflect training rather than decoding.
Our training hyperparameters are in \cref{tab:training-information}

We run the evaluation on 50 SAT and 50 UNSAT benchmarks from the test set, integrating each heuristic in the \texttt{Choose-Split} method of \cref{alg:cube_and_conquer}. Baselines include symbolic heuristics implemented in Z3~\citep{de2008z3}(unit, ternary, march-cu, heuleu, heule-schur), implemented in python (random) and LLMs (Qwen3-4B, Qwen3-32B, GPT-OSS-20B, GPT-OSS-120B, Claude-Sonnet-3.7, Claude-Sonnet-4). If LLMs could not make a valid decision for a child node (holding a sub-formula) after 10 attempts, C\&C aborted that node and continued on others. The end-to-end C\&C timeout for each benchmark is 30 minutes, while the intermediate CDCL solving timeout is 5 seconds. For the CDCL process we used the glucose v4.2 solver~\citep{audemard2018glucose}, based on MiniSat~\citep{een2005minisat}. The evaluation was parallelized over 4 processes (that used the same vLLM server), evenly split between SAT and UNSAT problems. For each heuristic, we run five iterations of the 100 CNF benchmarks.

\Cref{tab:pass_at_k_scores} presents the \passat{k} scores of different heuristics. This score calculates the number of \emph{unique} benchmarks solved across up to $k$ independent attempts. Our base model (Qwen3-4B) achieves an initial 46 \passat{5} score. Applying SFT training (Qwen3-4B-SFT) raises this to 51 and the DPO training (Qwen3-4B-SFT-DPO) afterwards increases it to 53. Comparatively, \emph{unit}, the best performing symbolic heuristic, manages to achieve a 53 \passat{5} score, while Claude-Sonnet-4 only reaches 50, outperformed by our supervised fine-tuned model (Qwen3-4B-SFT).
\Cref{tab:extended-success-statistics} complements these results by showing the raw per-run scores across all 5 attempts. An important distinction emerges: while symbolic heuristics solve nearly the same benchmarks each run (e.g., \emph{unit} averages 51.6 total per run), our neural heuristic exhibits higher variance (Qwen3-4B-SFT-DPO averages 47.4 per run). The gap between per-run average (47.4) and \passat{5} (53) reveals that different runs solve \emph{different} benchmarks -- the model explores diverse regions of the search space across attempts. Notably, DPO reduces per-run variance ($\sigma$ from 2.3 to 0.9), stabilizing performance while maintaining solution diversity.

\begin{figure}[t]
    \centering
    \begin{subfigure}[t]{0.641\linewidth}
        \centering
        \includegraphics[width=\linewidth]{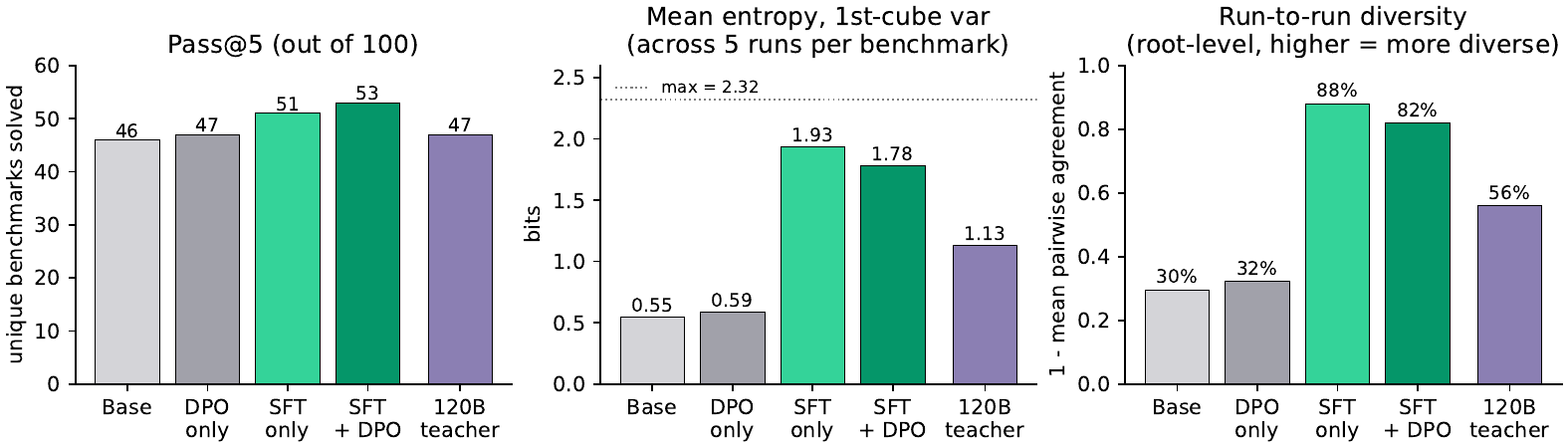}
        \caption{{Stage ablation of the 4B model over \passat{5}, first-cube entropy, and first-cube diversity; the 120B teacher (rightmost) is shown for reference.}}
        \label{fig:training-stage-ablation}
    \end{subfigure}
    \hfill
    \begin{subfigure}[t]{0.338\linewidth}
        \centering
        \includegraphics[width=\linewidth]{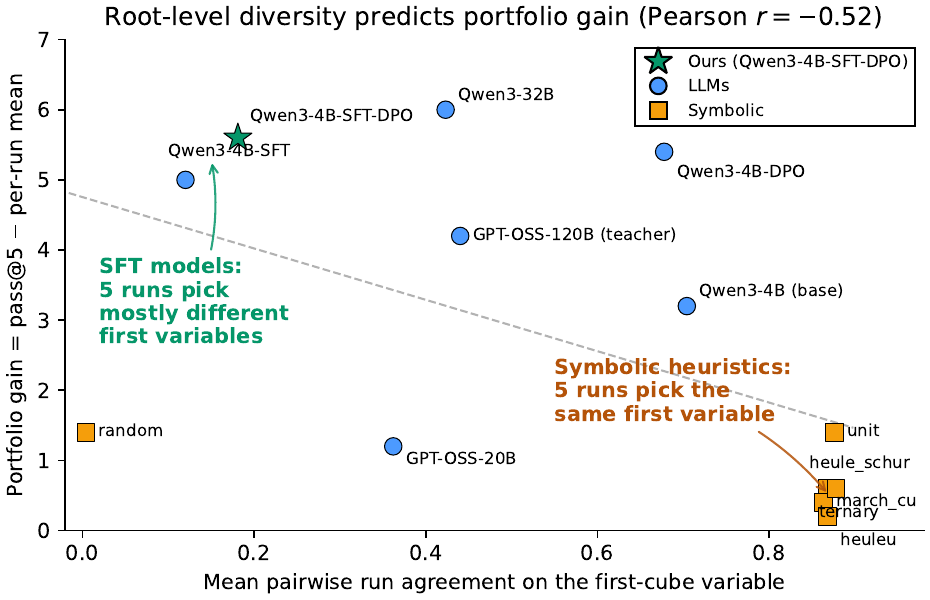}
        \caption{{Portfolio gain vs first-cube agreement, one dot per heuristic.}}
        \label{fig:agreement-vs-gain-main}
    \end{subfigure}
    \caption{SFT unlocks root-level decision diversity (a), and that diversity predicts portfolio gain across heuristics (b). \emph{(a)} reports \passat{5}, mean first-cube Shannon entropy across 5 runs per benchmark (max $\log_2 5 \approx 2.32$ bits), and first-cube diversity ($1 -$ mean pairwise run agreement) for the four training stages of the 4B model plus the 120B teacher as a reference. \emph{(b)} plots \emph{portfolio gain} ($\passat{5} -$ per-run mean) against first-cube agreement for all 13 heuristics in \cref{tab:pass_at_k_scores}; Pearson $r = -0.52$. Cross-heuristic context in \cref{subsec:diversity}; annotated version of~(b) in \cref{fig:agreement-vs-gain}.}
    \label{fig:mechanism}
\end{figure}

This per-run diversity is not a generic LLM property (\cref{fig:mechanism}). An entropy/agreement ablation on realized first-cube variables across seven baselines (\cref{subsec:diversity}) shows that the untrained Qwen3-4B base and the DPO-only ablation are near-deterministic at the root (mean Shannon entropy 0.55--0.59 bits across 5 runs, 68--70\% pairwise run agreement), comparable to deterministic symbolic heuristics (0.22--0.25 bits, 86--88\% agreement). SFT lifts the model into a qualitatively different regime: Qwen3-4B-SFT reaches 1.93 bits / 12\% agreement, and Qwen3-4B-SFT-DPO retains 1.78 bits / 18\% agreement. SFT, not DPO, therefore accounts for the root-level diversity that underlies the 47.4 $\to$ 53 gap; DPO applied on top of SFT trades a small amount of diversity (1.93 $\to$ 1.78 bits) for two additional \passat{5} benchmarks (51 $\to$ 53). Across all 13 heuristics (\cref{fig:mechanism}b), first-cube agreement and portfolio gain are negatively correlated (Pearson $r = -0.52$), confirming the diversity-to-coverage link is not specific to our model. Notably, the 4B student exceeds its 120B distillation teacher on all three axes (\passat{5}: 53 vs 47, entropy: 1.78 vs 1.13 bits, run agreement: 18\% vs 44\%); full cross-heuristic comparison in \cref{tab:first-cube-comparison}.

\Cref{fig:success-heatmap-extended} (in the appendix) shows a fine-grained overview of the heuristic success results. The benchmarks are identified by their index (0-49 SAT, 50-99 UNSAT) and each heuristic is run for five attempts (5 horizontal rows) per benchmark. Green cells indicate a solved benchmark instance, while yellow cells represent failure to solve with a timeout of 30 minutes.
Qwen3-4B-SFT-DPO, in \passat{5}, solves two benchmarks (24, 73) that \emph{unit} did not, while \emph{unit} solves two benchmarks (30, 77) that our model could not, a concrete instance of the complementary coverage that motivates a portfolio combination. \Cref{tab:pass_at_k_scores} also reveals SAT/UNSAT performance differences. Paired t-tests (\cref{subsec:sat-vs-unsat}) show that most heuristics are balanced, with five exceptions: the symbolic \emph{unit}, \emph{heule\_schur}, and \emph{march\_cu} heuristics all lean SAT ($p=0.033$, $0.025$, $0.003$ respectively), consistent with the SAT split's concentration on hard random 3-SAT instances (\cref{subsec:benchmark-families,subsec:sat-vs-unsat}), a class where lookahead heuristics like these typically outperform CDCL alone; GPT-OSS-120B leans UNSAT (20.8 vs 22.0, $p=0.033$); and Qwen3-4B-DPO (DPO only, no SFT) has a strong UNSAT bias (19.8 vs 21.8, $p=0.003$). Notably, our main model Qwen3-4B-SFT-DPO is well-balanced (SAT=23.8, UNSAT=23.6, $p=0.778$), suggesting that SFT provides a balanced foundation that DPO preserves.

\begin{figure*}[t]
    \tiny
    \centering
    \begin{subfigure}{0.365\textwidth}
        \tiny
        \centering
        \includegraphics[width=\linewidth]{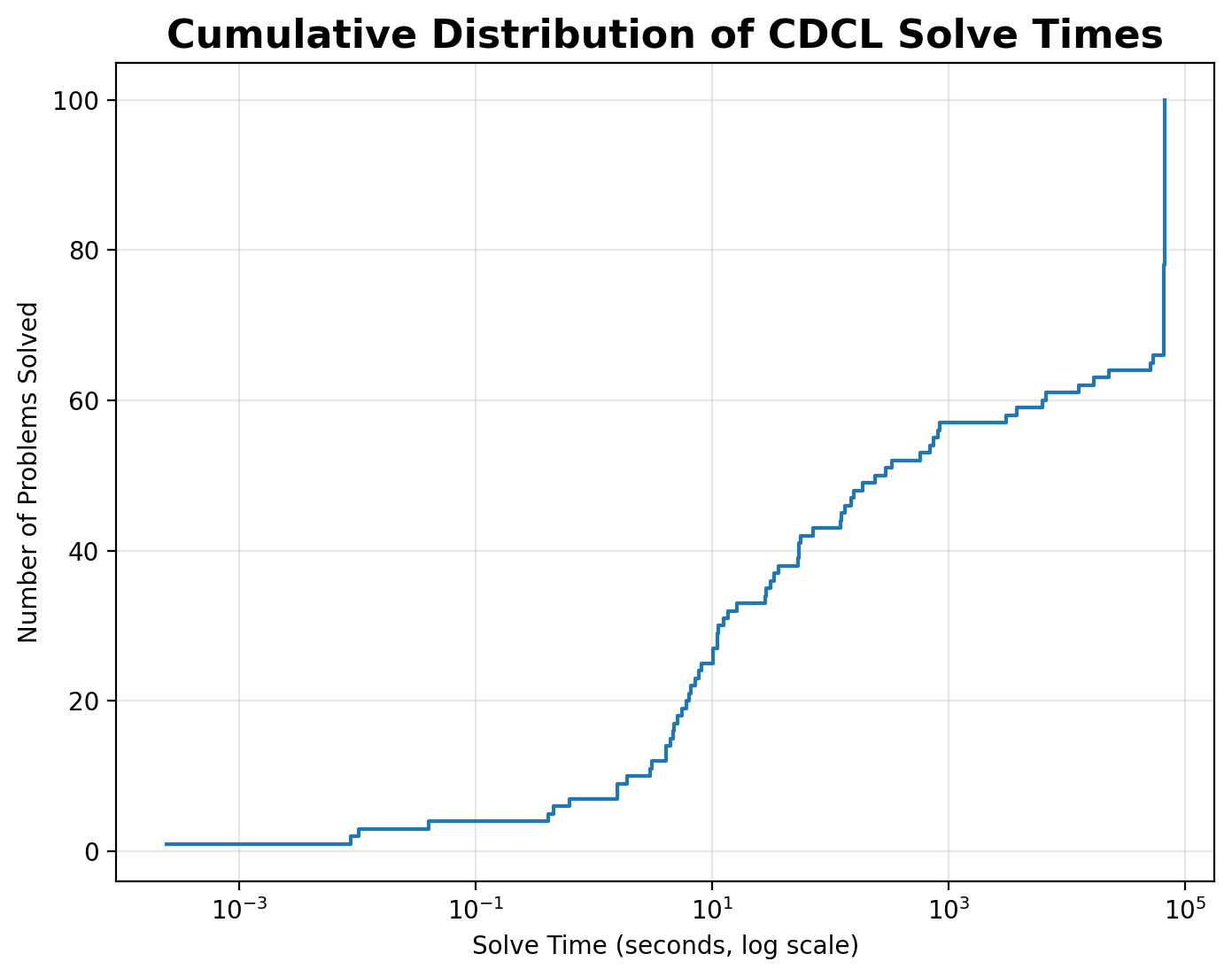}
        \caption{Cumulative distribution of test benchmark CDCL solve times (timeout =24h). 34 problems remained unsolved.}
        \label{fig:benchmark-difficulty-cdf}
    \end{subfigure}
    \hfill
    \begin{subfigure}{0.60\textwidth}
        \tiny
        \centering
        \includegraphics[trim=0 0 0 10em, clip, width=\linewidth]{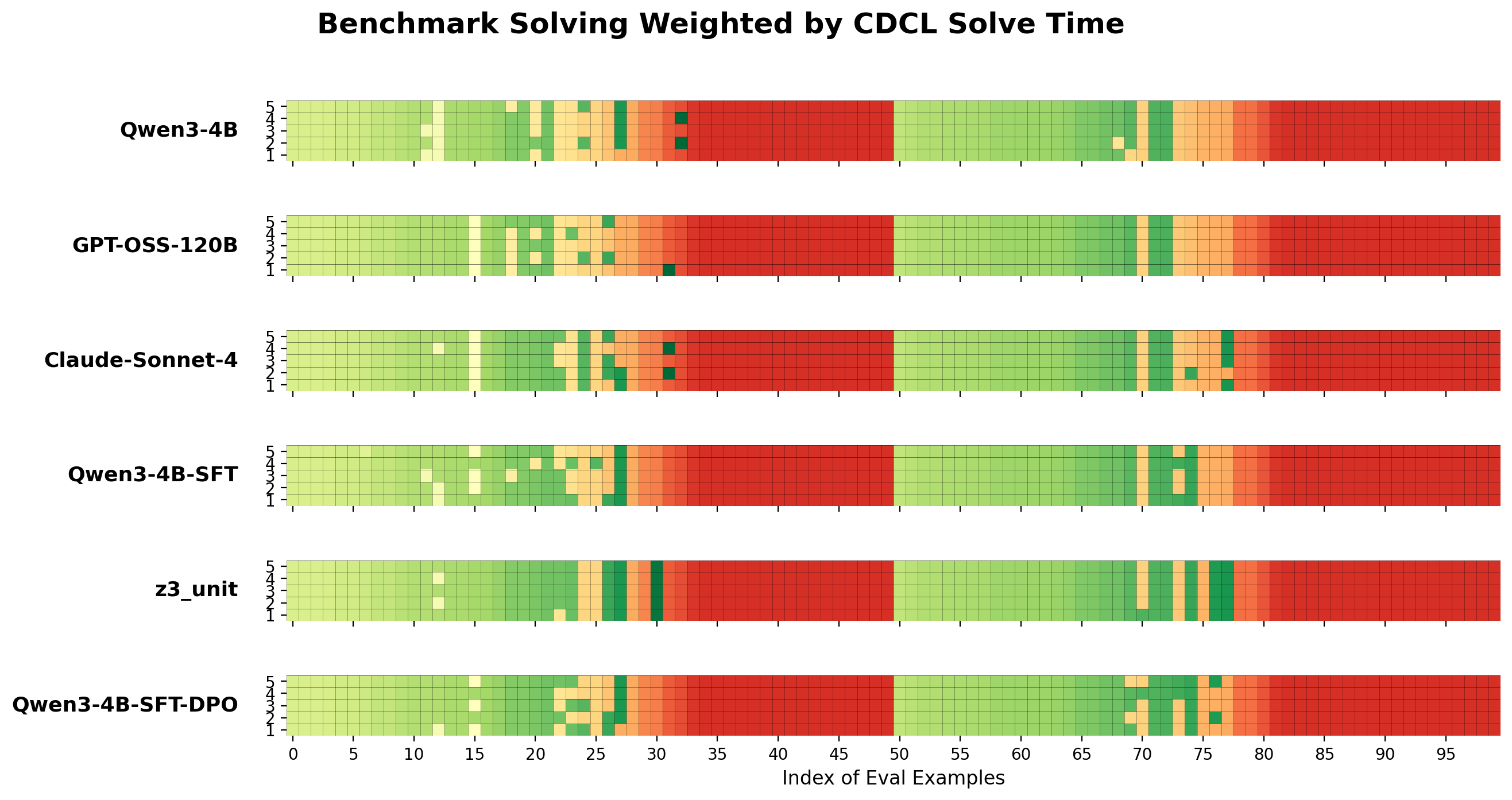}
        \vspace{-0.2cm}
        \caption{Solved benchmarks weighted and ordered by CDCL solve time [0-49 sat, 50-99 unsat]. Green implies solved benchmark, red is unsolved ($<$30min). Deeper color denotes longer CDCL solve time. }
        \label{fig:cnc-time-heatmap}
    \end{subfigure}
    \caption{CDCL solve time (without cubing) acts as a proxy for problem difficulty.}
\end{figure*}

\begin{figure*}[t]
    \tiny
    \begin{subfigure}{0.37\linewidth}
        \tiny
        \centering
        \includegraphics[width=\textwidth]{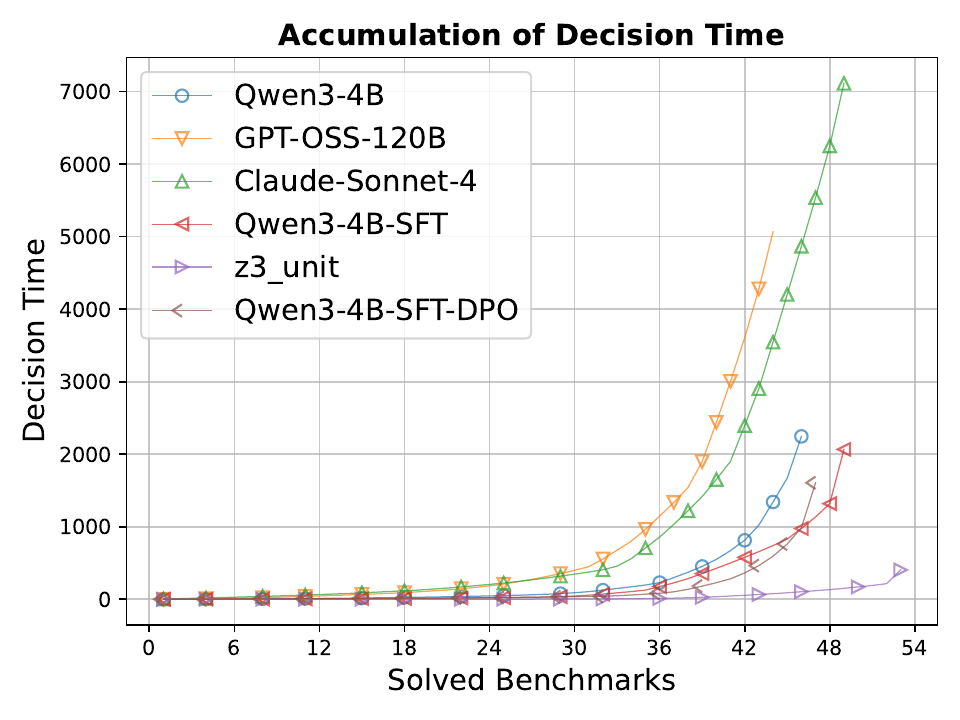}
        \caption{Accumulated decision time (sec) per heuristic according to their best attempts. Larger models need more time to decide.}
        \label{fig:accumulated-decision-time}
    \end{subfigure}
    \hfill
    \begin{subfigure}{0.6\linewidth}
        \tiny
        \centering
        \includegraphics[trim=0 0 0 10em, clip, width=\textwidth]{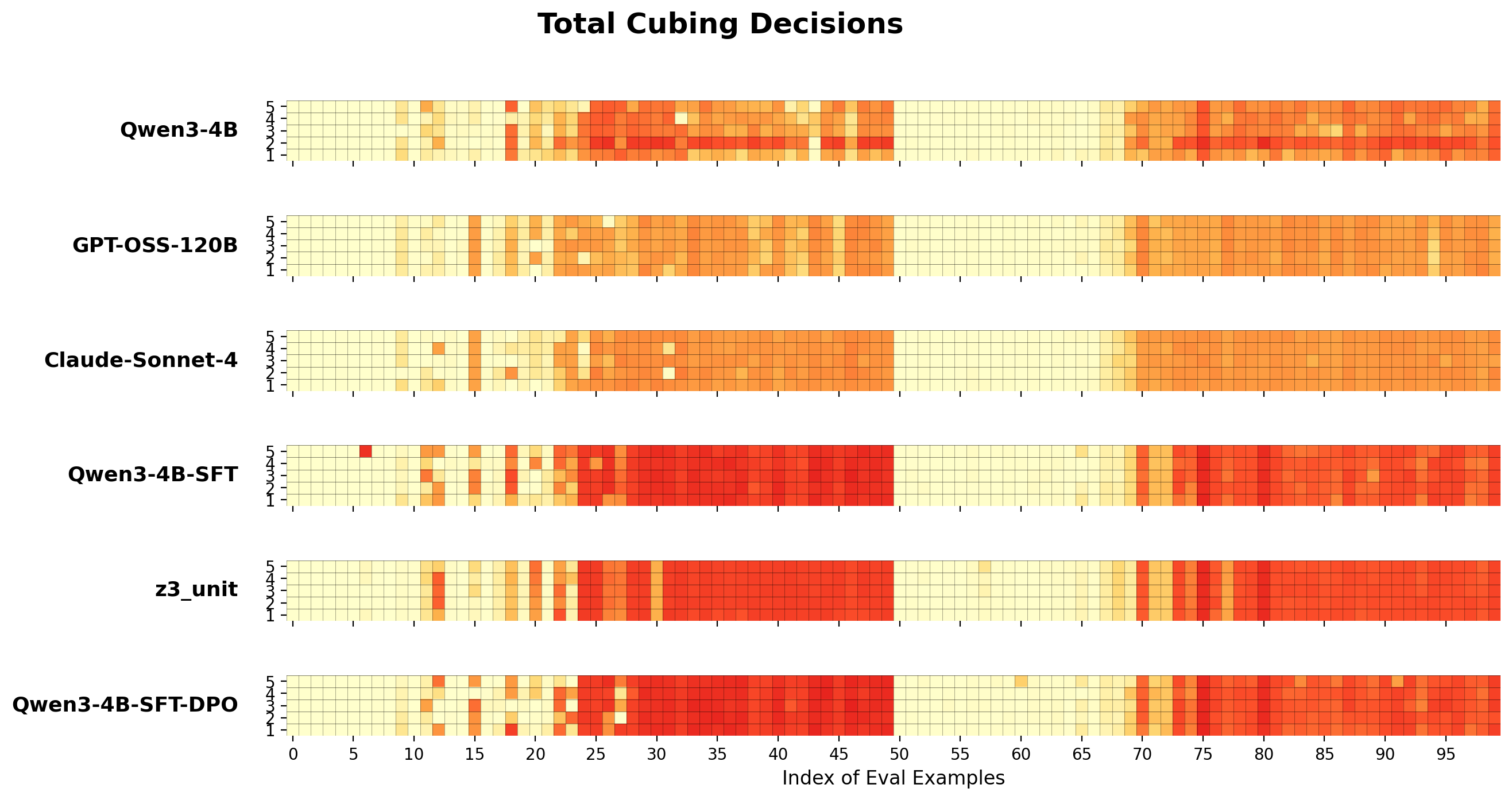}
        \caption{Total cubing decisions made per benchmark (deeper color denotes more decisions). Symbolic heuristics and smaller models make more decisions in allotted time.}
        \label{fig:total-decisions-heatmap}
    \end{subfigure}
    \caption{Larger models have larger response time, leading them to fewer total decisions than the symbolic heuristics.}
\end{figure*}

To better understand the difficulty distribution of the chosen benchmarks, we measure the time to solve each benchmark with a CDCL solver as a proxy of hardness.
\Cref{fig:benchmark-difficulty-cdf} displays the cumulative solve time distribution of the test benchmarks, running the glucose v4.2 solver with a timeout of 24h.
We see that roughly $58\%$ of our benchmarks could be solved in 1000 sec (16.6 min), while $34\%$ of them could not be solved in 24 hours (top right vertical line). To showcase this difficulty in our experiments, \Cref{fig:cnc-time-heatmap} presents the time each C\&C run spent for SAT solving (conquering phase). Here, the cell color (green, red) indicates whether the benchmark was solved, while the color hue indicates benchmark difficulty. Deeper green indicates successful but time consuming attempts, while deeper red indicates that C\&C timed out. We observe that all heuristics performed similarly on easier (or harder) problems, with a few of the medium-challenging problems acting as the differentiators.

Decision time matters because it bounds how many splits each heuristic can make within the 30-minute timeout. \Cref{fig:accumulated-decision-time} reveals three regimes: large LLMs (Claude-Sonnet-4, GPT-OSS-120B) at the top, the symbolic \emph{unit} heuristic at the bottom, and our fine-tuned models in the middle alongside the base Qwen3-4B, motivating the choice of a small LLM for end-to-end C\&C. Qwen3-4B-SFT-DPO is faster than the base Qwen3-4B despite identical parameter count; inspection of the inference logs attributes this to fewer output-formatting retries. The full per-heuristic graph is in \cref{fig:accumulated-decision-time-extended}.

\Cref{fig:total-decisions-heatmap} shows the same three-tier structure in decision count: the slowest LLMs make the fewest decisions, which combined with their lower \passat{5} scores supports the claim that response time hinders exploration. Count, however, does not imply quality, because Claude-Sonnet-4 solves more problems than faster heuristics such as Qwen3-4B.

\textbf{Learned Heuristic Analysis.} A post-hoc LLM-as-judge classification of 388 reasoning traces distributes reasoning patterns across established SAT heuristic categories (\cref{tab:heuristic-top7}, left block; full 11-row taxonomy in \cref{tab:heuristic-classification}). \emph{Variable Frequency Analysis}~(H1, 73.5\%) is dominant, but \cref{subsec:freq-rank} shows this frequency preference is shared by every LLM we measured (including the untrained base) and is therefore best read as a calibration observation rather than a training outcome. Two human experts (A and B) independently primary-labelled a random sample of 50 traces; the right block of \cref{tab:heuristic-top7} compares the three raters, and the summary statistics at the bottom (Cohen's $\kappa$, Fleiss' $\kappa$) confirm the LLM judge agrees with each human (0.45 and 0.48, moderate) within the range of inter-human disagreement (0.37); the full methodology is in \cref{subsec:inter-rater,subsec:heuristic-classification-prompt}.

\begin{table}[h]
    \centering
    \footnotesize
    \caption{\emph{Left block}: top-7 primary heuristics on the full 388-trace classification (\emph{Cnt}: trace count, \emph{\%}: share of 388, \emph{Pri}: traces where this was the primary label). \emph{Right block}: 50-trace inter-rater validation, reporting how Human A, Human B, and Claude-Sonnet-4.5 assigned primary labels; \emph{Claude $\kappa$ vs A/B} are per-heuristic Cohen's $\kappa$ against each human. Summary row: overall pairwise $\kappa$ and Fleiss' $\kappa$ across all three raters. Full 11-row distribution in \cref{tab:heuristic-classification}.}
    \label{tab:heuristic-top7}
    \vspace{0.4em}
    \begin{tabular}{@{}lllrrrrrrrr@{}}
        \toprule
        & & & \multicolumn{3}{c}{\textbf{Full (n=388)}} & \multicolumn{5}{c}{\textbf{Validation (n=50)}} \\
        \cmidrule(lr){4-6} \cmidrule(lr){7-11}
        \textbf{ID} & \textbf{Heuristic} & \textbf{Literature} & \textbf{Cnt} & \textbf{\%} & \textbf{Pri} & \textbf{A} & \textbf{B} & \textbf{Cl} & \textbf{$\kappa_{A,Cl}$} & \textbf{$\kappa_{B,Cl}$} \\
        \midrule
        H1 & Variable Frequency       & \cite{moskewicz2001chaff}     & 285 & 73.5 & 192 & 16 & 21 & 24 & 0.43 & 0.56 \\
        H2 & Balanced Splitting       & \cite{heule2011cube}          & 170 & 43.8 &  60 &  7 &  4 &  6 & 0.56 & 0.34 \\
        H6 & Complexity Reduction     & \cite{heule2011cube}          & 169 & 43.6 &  37 &  6 &  3 &  9 & 0.30 & 0.45 \\
        H7 & Polarity Balance         & \cite{silva1996grasp}         &  89 & 22.9 &  59 & 10 & 12 &  9 & 0.81 & 0.58 \\
        H5 & Structural Pattern       & \cite{ansotegui2019community} &  33 &  8.5 &   9 &  4 &  3 &  1 & $-$0.03 & 0.48 \\
        H4 & Clause-Size Weighting    & \cite{jeroslow1990solving}    &  32 &  8.2 &   3 &  0 &  0 &  0 & --- & --- \\
        H3 & Clause Elimination       & \cite{davis1962machine}       &  30 &  7.7 &   5 &  2 &  3 &  0 & 0.00 & 0.00 \\
        \midrule
        \multicolumn{11}{@{}l}{\textbf{Overall} \quad $\kappa_{A,B} = 0.37$ \quad $\kappa_{A,Cl} = 0.45$ \quad $\kappa_{B,Cl} = 0.48$ \quad\textbf{Fleiss' $\kappa$ (A + B + Cl) = 0.43}} \\
        \bottomrule
    \end{tabular}
\end{table}

\section{Limitations}
We note four limitations. \emph{Formula size.} Context-window constraints restrict us to formulas with up to 600 variables and 1000 clauses; sliding-window attention, formula chunking, or hierarchical variable selection are natural mitigations but remain unexplored. \emph{Inference cost.} Per-decision inference is slower than CPU-based symbolic heuristics (\cref{fig:accumulated-decision-time}); our 30-minute-per-benchmark evaluation shows this does not prevent matching the best symbolic heuristic on \passat{5}, but the gap would widen under tighter budgets. \emph{Depth of the mechanism claim.} Our mechanism analysis (\cref{fig:mechanism}) examines only the first-cube decision; whether SFT's diversity effect persists at depth is open. \emph{Scope of the benchmarks.} Our findings about SFT-driven diversity may not transfer to instance families with very different clause structure (e.g., random 3-SAT at the phase transition).

\section{Conclusion}

We presented the first successful approach for training transformers to learn effective cubing heuristics for Cube-and-Conquer SAT solving. Our neuro-symbolic post-training framework enables a 4B-parameter model to achieve pass@5 of 53, matching the best symbolic heuristics and surpassing frontier LLMs. Ablations show clear progression: SFT improves the base model from 46 to 51, while adding DPO reaches 53, demonstrating that transformers can acquire effective cubing policies with appropriate supervision. An ablation on realized first-cube decisions further identifies SFT, rather than DPO, as the training stage responsible for root-level decision diversity, with DPO contributing two additional \passat{5} benchmarks on top.
Unlike the symbolic heuristics it matches on \passat{5}, which all lean significantly toward SAT instances, our model is SAT/UNSAT-balanced and solves complementary benchmarks, making it a candidate for portfolio-based solvers. The same framework could apply to other NP-hard decision problems whose solvers rely on learned branching heuristics, such as QBF~\citep{zhang2002conflict,biere2009handbook} and SMT~\citep{de2008z3}.

\bibliographystyle{plainnat}
\bibliography{references}

\newpage
\appendix
\section{Appendix}

\begin{figure}[p]
    \centering
    \includegraphics[width=\linewidth]{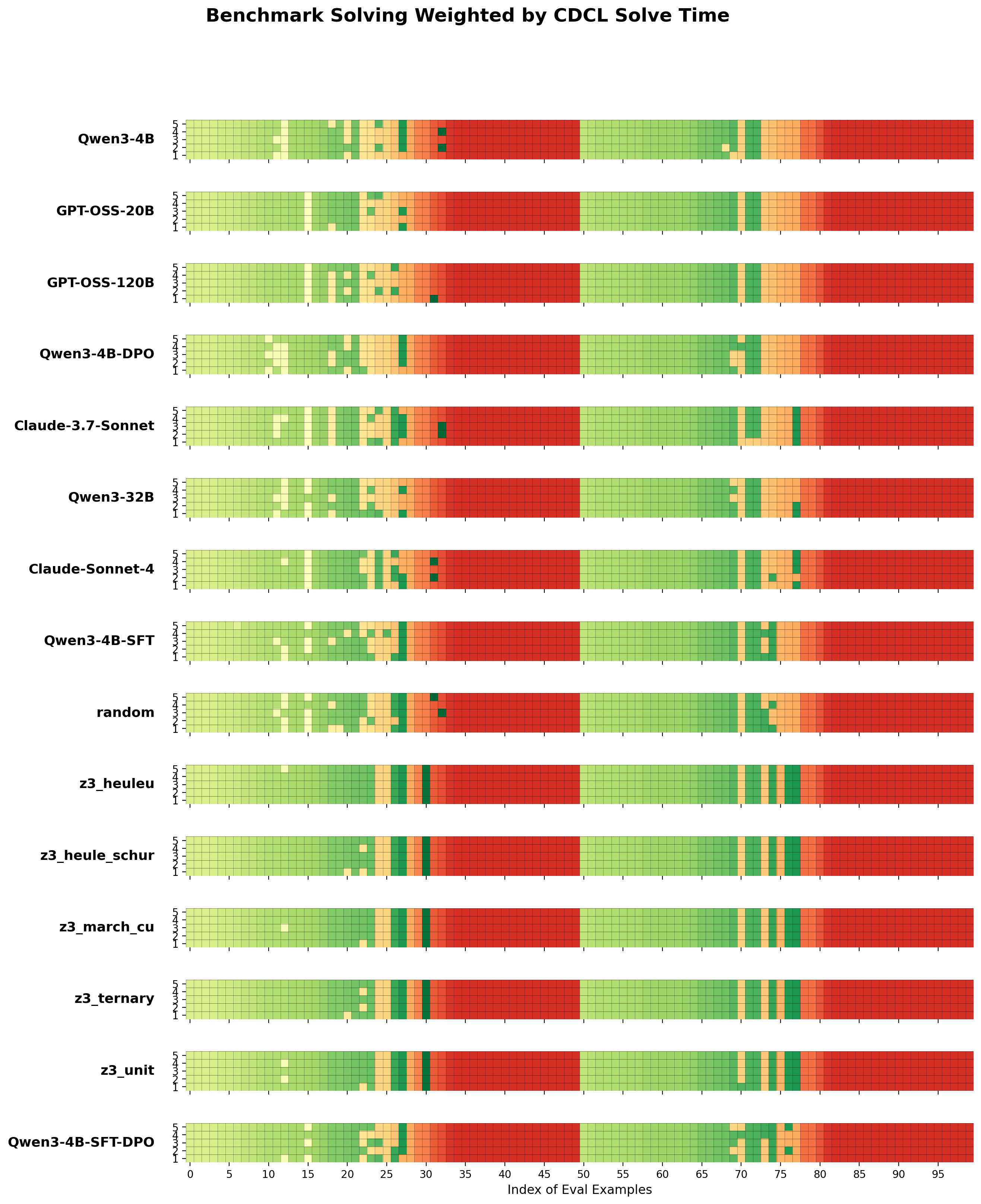}
    \caption{Solved benchmarks weighted and ordered by CDCL solve time [sat,unsat]. Green implies solved benchmark, red is unsolved ($<30min$). Deeper color denotes longer CDCL solve time.}
    \label{fig:cnc-time-heatmap-extended}
\end{figure}

\begin{figure}[p]
    \centering
    \includegraphics[width=\linewidth]{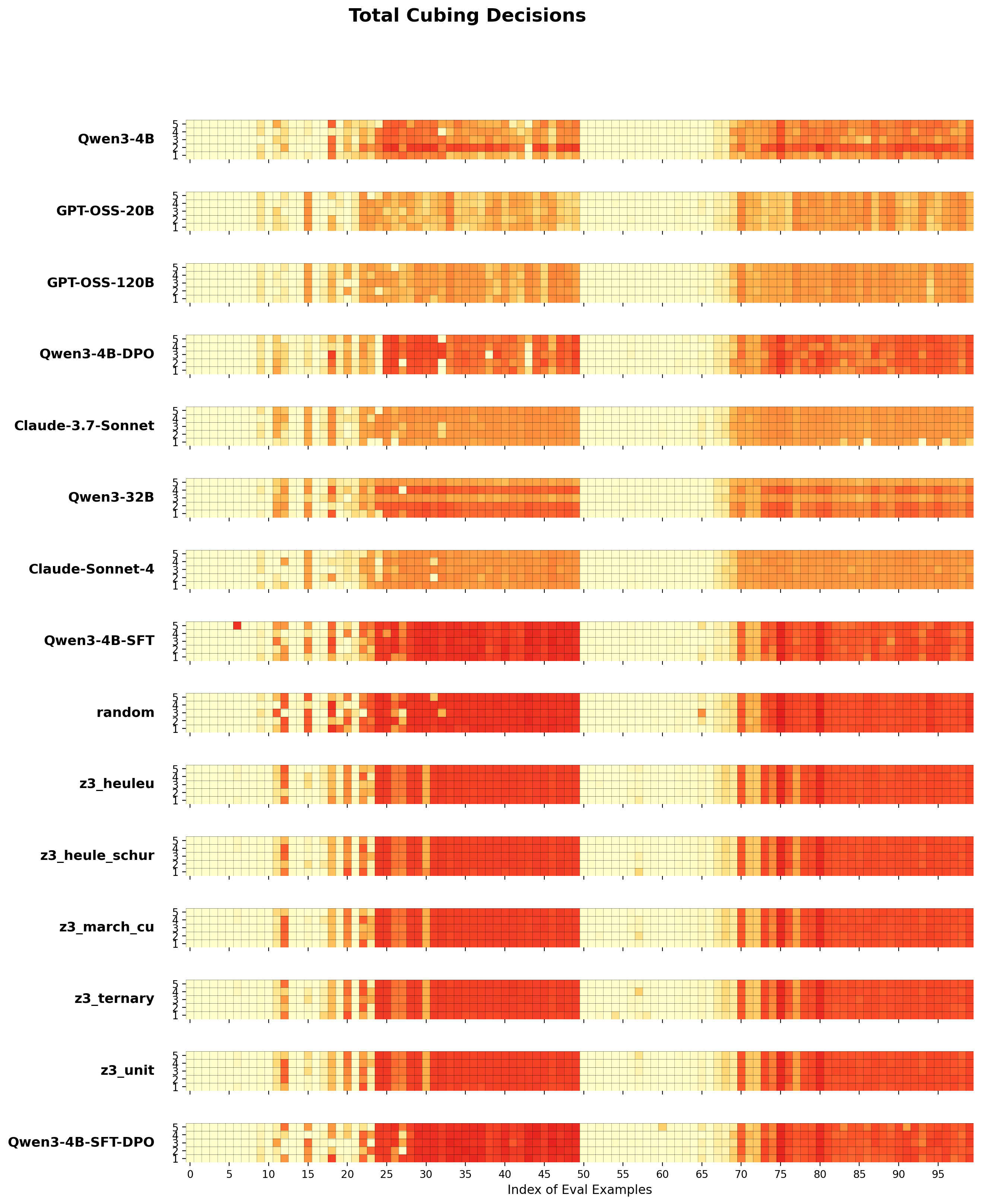}
    \caption{Total decisions made per benchmark (green implies fewer decisions). Symbolic heuristics and smaller models make more decisions in allotted time.}
    \label{fig:total-decisions-heatmap-extended}
\end{figure}

\begin{figure}[p]
    \centering
    \includegraphics[width=0.96\linewidth]{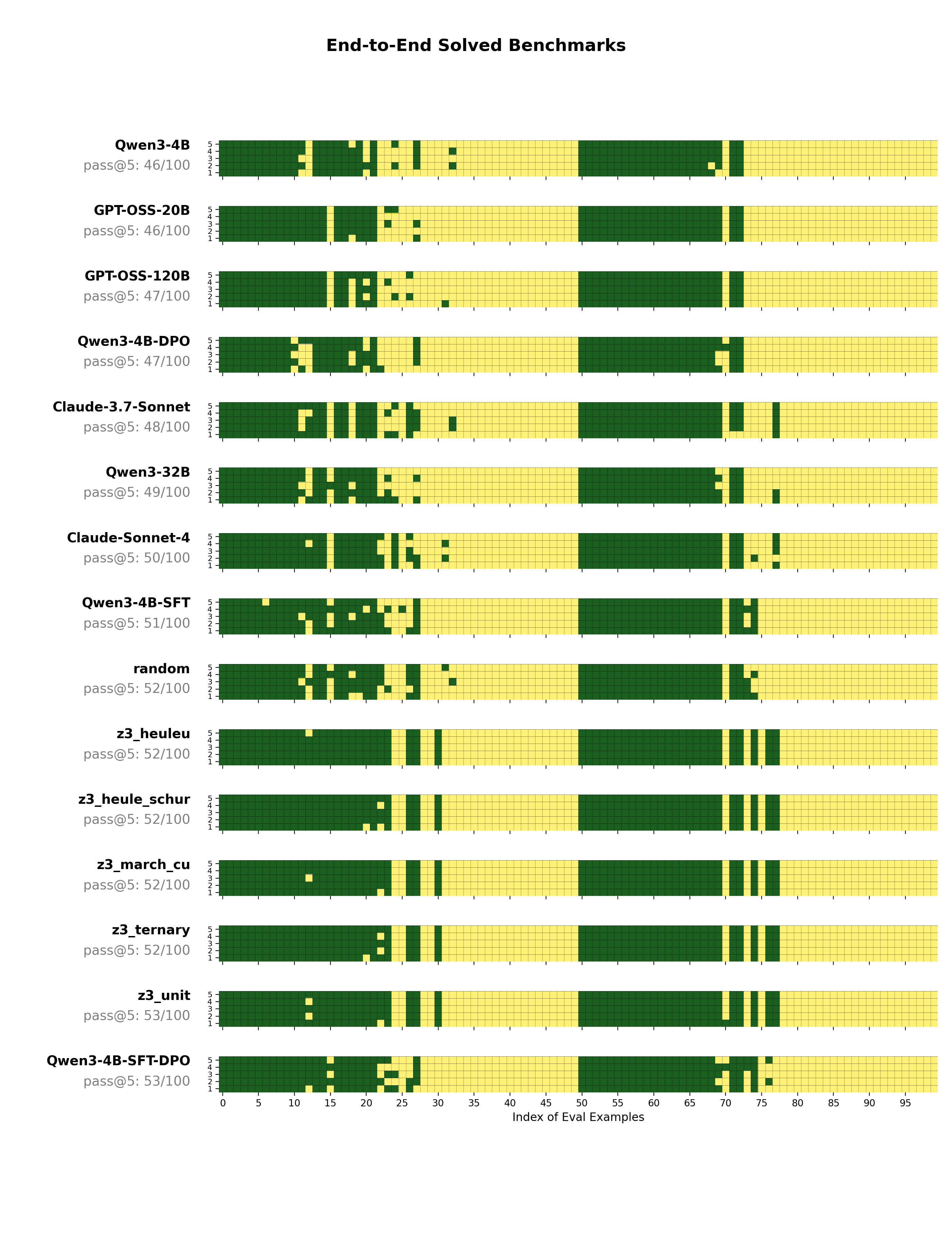}
    \caption{Fine-grained overview of successes (green) and unknowns (yellow) for each benchmark. First 50 benchmarks are SAT, last 50 are UNSAT. The heuristics are sorted based on \passat{5} score.}
    \label{fig:success-heatmap-extended}
\end{figure}

\begin{figure}[p]
    \centering
    \includegraphics[width=0.8\linewidth]{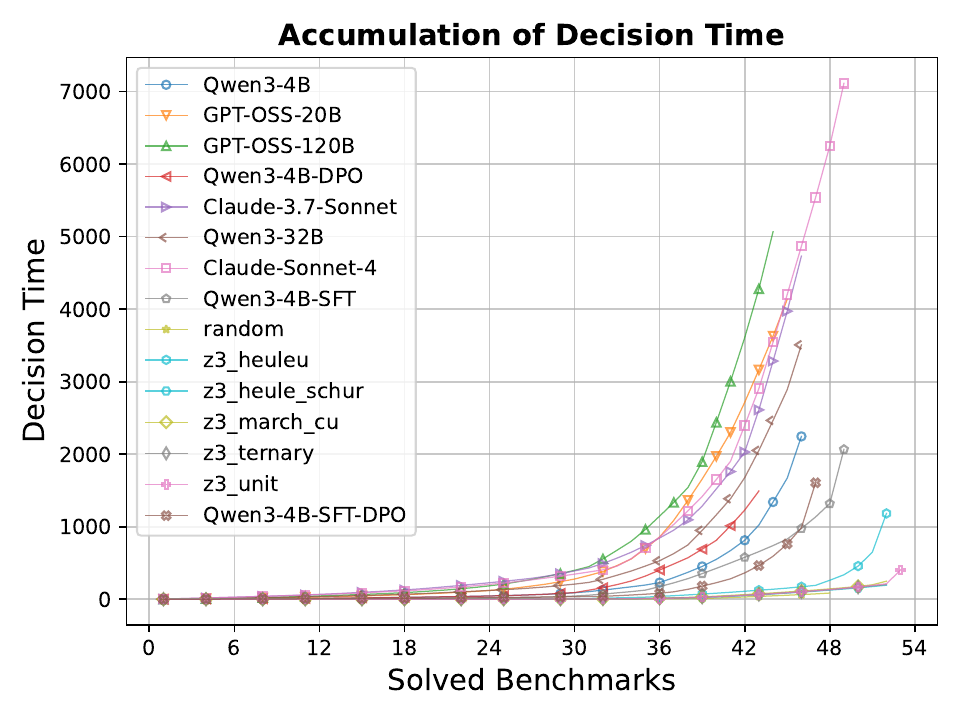}
    \caption{Accumulated decision time (sec) per heuristic according to their best attempts. Larger models need more time to decide and respond.}
    \label{fig:accumulated-decision-time-extended}
\end{figure}

\begin{figure}[p]
    \centering
    \includegraphics[width=1\linewidth]{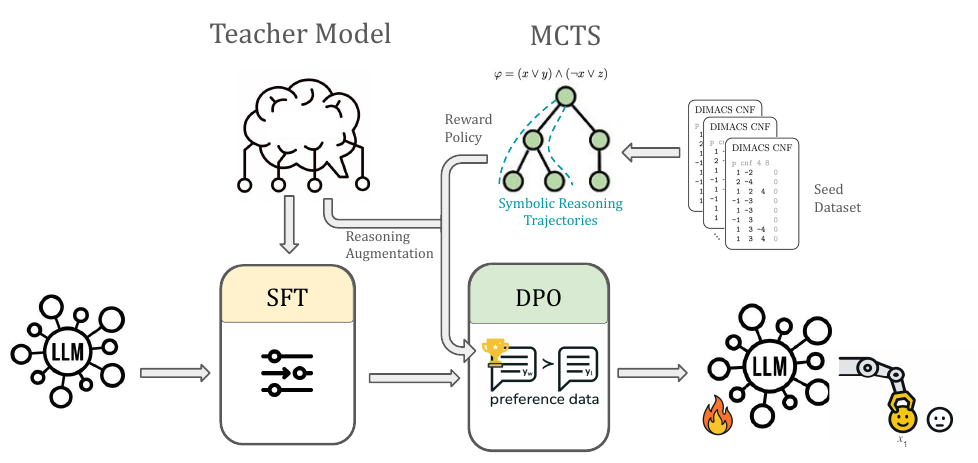}
    \caption{{\it Neuro-symbolic post-training pipeline}. A teacher model first creates SFT data for the first training stage. Monte-Carlo-Tree Search is then guided by symbolic cubing decisions to create DPO data, which is augmented with reasoning traces from the teacher model.}
    \label{fig:training-pipeline}
\end{figure}

\subsection{Training Hyperparameters}

\Cref{tab:training-information} reports the full set of training hyperparameters used to produce the three model variants (Qwen3-4B-DPO, Qwen3-4B-SFT, Qwen3-4B-SFT-DPO) discussed in \cref{sec:results}. The end-to-end neuro-symbolic post-training pipeline that generates the SFT and DPO data is illustrated in \cref{fig:training-pipeline}.

\begin{table}[h]
    \centering
    \small
    \caption{Training Hyperparameters}
    \label{tab:training-information}
    \begin{tabular}{l|l|l|l}
        \toprule
        \textbf{Model Name}         & \textbf{Qwen3-4B-DPO} & \textbf{Qwen3-4B-SFT} & \textbf{Qwen3-4B-SFT-DPO} \\
        \midrule
        Max Length                  & 8192                  & 17500                 & 8192                      \\
        Max Prompt Length           & 6144                  & -                     & 6144                      \\
        Per Device Train Batch Size & 1                     & 1                     & 1                         \\
        Per Device Eval Batch Size  & 8                     & -                     & 8                         \\
        Gradient Accumulation Steps & 8                     & 32                    & 8                         \\
        Number of Epochs            & 2                     & 2                     & 3                         \\
        Learning Rate               & 5e-6                  & 1e-5                  & 5e-6                      \\
        Optimizer                   & adamw\_8bit           & adamw\_torch\_fused   & adamw\_8bit               \\
        Weight Decay                & 0.1                   & 0.01                  & 0.1                       \\
        Learning Rate Scheduler     & cosine                & cosine                & cosine                    \\
        Warmup Ratio                & 0.1                   & 0.05                  & 0.1                       \\
        \midrule
        GPUs                        & 8 H100-80G            & 8 H100-80G            & 8 H100-80G                \\
        Duration                    & 3h 10m 11s            & 1h 6m 18s             & 8h 50m 30s                \\
        Checkpoint                  & final                 & final                 & 150 global step           \\
        \bottomrule
    \end{tabular}
\end{table}

\subsection{Inter-Rater Agreement for Heuristic Classification}~\label{subsec:inter-rater}

To validate the LLM-as-a-judge approach, two human experts (A and B) independently classified the same random sample of 50 reasoning traces into the primary-heuristic categories. We then ran the same classification prompt via Claude-Sonnet-4.5 and computed Cohen's $\kappa$~\citep{cohen1960coefficient} on the primary label and Fleiss' $\kappa$ across all three raters. Per-heuristic counts and Claude-vs-human $\kappa$ for the top-7 categories are reported in the right block of \cref{tab:heuristic-top7} in the main body; the overall pairwise $\kappa$ is 0.45 (Human~A vs Claude) and 0.48 (Human~B vs Claude), both in the \emph{moderate} Landis--Koch range, and slightly above inter-human agreement (0.37) on the same 50 traces. Fleiss' $\kappa$ = 0.43 across all three raters confirms the LLM judge performs within the range of human-to-human disagreement. Per-heuristic $\kappa$ is cleanest on Polarity Balance (H7) and H1 Variable Frequency. The near-zero or negative per-heuristic $\kappa$ for H3 and H5 reflects very low base rates (at most 4 primaries out of 50 on either rater's sheet), where $\kappa$ is numerically unstable; the raw counts remain close across raters.

\subsection{Theoretical Analysis of the Learning Framework}~\label{app:theoretical}

We provide a formal mathematical analysis of our neuro-symbolic training framework, including the MDP formulation, reward function properties, and training objectives.

\subsubsection{MDP Formulation for Cubing}

We formalize the cubing problem as a Markov Decision Process (MDP) $\mathcal{M} = (\mathcal{S}, \mathcal{A}, T, R, \gamma)$~\citep{russell1995modern}:

\textbf{State Space $\mathcal{S}$:} The set of all CNF formulas $\phi$ that can arise during cube-and-conquer. Each state $s \in \mathcal{S}$ represents a CNF formula after zero or more splitting decisions have been applied.

\textbf{Action Space $\mathcal{A}(s)$:} For a state $s$ representing formula $\phi$, the action space is $\mathcal{A}(s) = \text{Vars}(\phi)$, the set of unassigned variables in $\phi$. Selecting action $v$ corresponds to choosing variable $v$ as the splitting variable.

\textbf{Transition Function $T$:} The transition is deterministic. Given state $s$ (formula $\phi$) and action $v$, the transition produces two child states:
\begin{equation}
    T(s, v) = \left( \text{simplify}(\phi \land v), \text{simplify}(\phi \land \neg v) \right)
\end{equation}
where $\text{simplify}(\cdot)$ applies unit propagation and clause elimination.

\textbf{Reward Function $R$:} As defined in \cref{eq:reward}, the reward captures the quality of a splitting decision.

\textbf{Discount Factor $\gamma$:} In our formulation, we use $\gamma = 1$ (undiscounted) since we care about total solving efficiency.

The optimal policy $\pi^*$ satisfies the Bellman optimality equation:
\begin{equation}~\label{eq:bellman}
    V^*(s) = \max_{v \in \mathcal{A}(s)} \left[ R(s, v) + \frac{1}{2} \left( V^*(s_v^+) + V^*(s_v^-) \right) \right]
\end{equation}
where $s_v^+ = \text{simplify}(\phi \land v)$ and $s_v^- = \text{simplify}(\phi \land \neg v)$ are the two child states. The factor of $\frac{1}{2}$ reflects that both branches must be solved in cube-and-conquer.

\subsubsection{Reward Function Analysis}

We analyze the mathematical properties of our reward function from \cref{eq:reward}:
\begin{equation}
    R_{\text{node}} = \frac{\log(1 + \varepsilon + u + e)}{\log(1 + \varepsilon + u + e + d + c)}
\end{equation}
where $\varepsilon > 0$ is a small constant ensuring numerical stability by preventing $\log(1) = 0$ in degenerate cases where all metrics are zero.

\begin{proposition}[Boundedness]
For all non-negative values $u, e, d, c \geq 0$ and $\varepsilon > 0$, we have $R_{\text{node}} \in (0, 1]$.
\end{proposition}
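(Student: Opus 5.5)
The plan is to set $a = 1+\varepsilon+u+e$ and $b = a + d + c$, so that $R_{\text{node}} = \log a / \log b$. First I would fix the range of $a$. Because $u,e \ge 0$ and $\varepsilon>0$, we get $a \ge 1+\varepsilon > 1$, and so $\log a > 0$; the base of the logarithm does not matter as long as it exceeds $1$. Next, $d,c\ge 0$ gives $b \ge a > 1$, so $\log b \ge \log a > 0$. This step also shows the denominator never vanishes, so $R_{\text{node}}$ is well defined for every admissible input.

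Strict positivity then follows at once, since $R_{\text{node}}$ is a ratio of two strictly positive numbers. For the upper bound I would use that $\log$ is strictly increasing on $(0,\infty)$. From $b \ge a$ we get $\log b \ge \log a$, and dividing by $\log b>0$ gives $R_{\text{node}} \le 1$.

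As a final remark I would show the interval is exactly $(0,1]$. Equality $R_{\text{node}}=1$ holds if and only if $b=a$, that is, $d=c=0$, which is the ``immediately solved'' case mentioned in the main text. The value $0$ is never attained but is approached: for fixed $u,e$, sending $d+c\to\infty$ gives $\log b\to\infty$ with $\log a$ fixed, so $R_{\text{node}} \to 0$.

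No step is a real obstacle here. The one point that needs care is the role of $\varepsilon>0$. Without it, the case $u=e=d=c=0$ would give $\log 1/\log 1 = 0/0$, and the case $u=e=0$ with $d+c>0$ would give $R=0$, which falls outside $(0,1]$. So the proof has to state explicitly where $\varepsilon>0$ is used, namely to guarantee $a>1$.
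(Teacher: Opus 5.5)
Your proposal is correct and follows the same route as the paper: you set $a = 1+\varepsilon+u+e$ and $b = a+d+c$ (the paper's $N$ and $D$), use $b \ge a \ge 1+\varepsilon > 1$ to get $\log b \ge \log a > 0$, and read off both bounds. Your additional remarks are also correct. Equality $R=1$ holds exactly when $d=c=0$, and the value $0$ is approached but never attained; the paper treats the latter separately as its Limiting Behavior proposition.
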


\begin{proof}
Let $N = 1 + \varepsilon + u + e$ (numerator argument) and $D = 1 + \varepsilon + u + e + d + c$ (denominator argument).

\textit{Upper bound:} Since $d, c \geq 0$, we have $D \geq N > 1$. Therefore $\log(D) \geq \log(N) > 0$, which implies $R = \frac{\log(N)}{\log(D)} \leq 1$ with equality when $d = c = 0$.

\textit{Lower bound:} Since $N \geq 1 + \varepsilon > 1$ and the logarithm is positive for arguments greater than 1, both numerator and denominator are positive. Thus $R > 0$.
\end{proof}

\begin{proposition}[Monotonicity]
The reward function $R$ is (1) monotonically increasing in $u$ and $e$ (unit propagations and eliminated clauses), and (2) monotonically decreasing in $d$ and $c$ (decisions and conflicts).
\end{proposition}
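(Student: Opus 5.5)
I will reuse the notation from the proof of the Boundedness proposition: $N = 1+\varepsilon+u+e$ and $D = N + d + c$, so that $R = \log N / \log D$. The variables enter only through the two sums $s = u+e$ and $t = d+c$. It therefore suffices to study
\[
f(s,t) = \frac{\log(1+\varepsilon+s)}{\log(1+\varepsilon+s+t)}
\]
for $s,t \ge 0$ and to show that $f$ is nondecreasing in $s$ and nonincreasing in $t$. Both denominators are strictly positive by the Boundedness proposition, so $f$ is differentiable on $s,t \ge 0$.

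\textbf{Monotonicity in $d$ and $c$.} This is the easy direction. For fixed $s$ the numerator $\log N > 0$ is constant. The denominator $\log(N+t)$ is strictly increasing in $t$ and positive, so $f$ is strictly decreasing in $t$ whenever $\log N > 0$, which always holds. Since $t = d + c$, the conclusion transfers directly to each of $d$ and $c$.

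\textbf{Monotonicity in $u$ and $e$.} This is the main step, because $s$ appears in both numerator and denominator. I differentiate in $s$ with $t$ fixed:
\[
\partial_s f = \frac{\frac{1}{N}\log D - \frac{1}{D}\log N}{(\log D)^2}.
\]
Its sign is the sign of $D\log D - N\log N$. The function $g(x) = x\log x$ is strictly increasing for $x > 1/e$, since $g'(x) = \log x + 1 > 0$ there. We have $D \ge N > 1$, so $g(D) \ge g(N)$, with equality exactly when $t = 0$. Hence $\partial_s f \ge 0$, and it is strictly positive when $d + c > 0$.

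When $d = c = 0$ we have $R \equiv 1$ by the Boundedness proposition, so there $R$ is constant in $u$ and $e$. This is consistent with ``monotonically increasing'' read as nondecreasing, and I would state this edge case explicitly.

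The one subtlety is the degenerate case $t = 0$, where strictness fails. Otherwise the argument uses only that $x\log x$ is increasing on $(1,\infty)$. The result for $u$ and $e$ individually then follows because $s = u + e$ is increasing in each of them.
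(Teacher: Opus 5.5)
Your proposal is correct and follows essentially the same route as the paper: with $N$ and $D$ as in the Boundedness proof, the sign of $\partial R/\partial u$ reduces to that of $D\log D - N\log N$, which is nonnegative because $D \ge N > 1$. For $d$ and $c$ you argue directly that a positive constant numerator over an increasing denominator decreases, whereas the paper computes $\partial R/\partial d = -\log N/(D\log^2 D)$, and your remark that strictness fails exactly when $d=c=0$ is an accurate refinement of the paper's $\ge 0$ conclusion.
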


\begin{proof}
Let $N = 1 + \varepsilon + u + e$ and $D = N + d + c$. Then $R = \frac{\log N}{\log D}$.

\textit{Increasing in $u$ (and similarly $e$):}
$\frac{\partial R}{\partial u} = \frac{D \cdot \log D - N \cdot \log N}{N \cdot D \cdot \log^2D}$.

We have $D \geq N > 1$, so $\log^2D > 0$, thus we need to show $D \cdot \log D - N \cdot \log N \geq 0 \implies D \cdot \log D \geq N \cdot \log N$.

So, $D \geq N > 1 \implies \log D \geq \log N > 0$, thus $D \cdot \log D \geq N \cdot \log N$, thus $D \cdot \log D \>\geq N \cdot \log N$.

\textit{Decreasing in $d$ (and similarly $c$):}
$\frac{\partial R}{\partial d} = -\frac{\log N}{D \cdot \log^2D} < 0$
since $\log N > 0$, $D \geq N > 1$, and $\log D \geq \log N > 0$.
\end{proof}

\begin{proposition}[Limiting Behavior]
(1) When $d = c = 0$ (immediately solved formula): $R = 1$. (2) As $d + c \to \infty$ (very hard formula): $R \to 0$.
\end{proposition}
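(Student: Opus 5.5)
Part (1) is a direct substitution, in the same notation $N = 1 + \varepsilon + u + e$, $D = N + d + c$ used in the proofs of the Boundedness and Monotonicity propositions. Setting $d = c = 0$ gives $D = N$. Since $N \geq 1 + \varepsilon > 1$, we have $\log N > 0$, so the ratio $\log N / \log N$ is well defined and equals $1$. This matches the equality case already identified in the Boundedness proof.

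For part (2), I will hold $u$, $e$ and $\varepsilon$ fixed, so $N$ is a fixed constant with $\log N \in (0, \infty)$. Write $t = d + c$, so that $D = N + t$. The reward depends on $d$ and $c$ only through $t$, because $R = \log N / \log(N + t)$. As $t \to \infty$, $\log(N + t) \to \infty$ by unboundedness of the logarithm. Then $R$ is a fixed positive number divided by a quantity tending to infinity, so $R \to 0$. I will also note two facts that make the limit clean. First, the convergence is monotone, by the Monotonicity proposition, since $R$ is decreasing in $d$ and in $c$. Second, $R$ stays strictly positive, by the Boundedness proposition, so $0$ is approached from above but never attained. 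This is consistent with the codomain $(0,1]$.

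The only real subtlety is pinning down what the limit means, since $u$ and $e$ are not constant in general. If $u + e$ is allowed to grow with $d + c$, the limit can fail; for example, if $u + e$ grows like $d + c$, then $R \to 1$. So I will state explicitly that the limit is taken with $u$, $e$ and $\varepsilon$ held fixed, or more generally with $u + e$ bounded. As a remark, I will add the sufficient condition $\log(1 + \varepsilon + u + e) = o(\log(d + c))$, under which the same argument goes through unchanged. Beyond fixing this quantifier, I expect no obstacle.
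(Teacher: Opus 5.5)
Your proof is correct and follows the paper's argument. For part (1) you substitute $d=c=0$ to get $D=N$; for part (2) you hold $N$ fixed (the paper's ``while $N$ remains fixed''), so that $\log N/\log D \to 0$ as $D \to \infty$. Your explicit requirement that $u+e$ stay bounded, with the counterexample $u+e \sim d+c$ giving $R \to 1$, makes precise a quantifier the paper leaves implicit, and your monotonicity and positivity remarks are harmless extras.
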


\begin{proof}
\textit{Case 1:} When $d = c = 0$, we have $D = N$, so $R = \frac{\log N}{\log N} = 1$.
\textit{Case 2:} As $d + c \to \infty$, $D \to \infty$ while $N$ remains fixed. Thus $\lim_{D \to \infty} \frac{\log N}{\log D} = 0$.
\end{proof}

\subsubsection{DPO Training Objective}

Direct Preference Optimization (DPO)~\citep{rafailov2023direct} learns from preference pairs without explicit reward modeling. Given a dataset $\mathcal{D} = \{(x_i, y_w^i, y_l^i)\}$ where $y_w$ is preferred over $y_l$ for prompt $x$, the DPO loss is:
\begin{equation}~\label{eq:dpo-loss}
    \mathcal{L}_{\text{DPO}}(\pi_\theta; \pi_{\text{ref}}) = -\mathbb{E}_{(x, y_w, y_l) \sim \mathcal{D}} \left[ \log \sigma \left( \beta \cdot h_\theta(x, y_w, y_l) \right) \right]
\end{equation}
where $\sigma$ is the sigmoid function, $\beta$ is a temperature parameter, and:
\begin{equation}
    h_\theta(x, y_w, y_l) = \log \frac{\pi_\theta(y_w | x)}{\pi_{\text{ref}}(y_w | x)} - \log \frac{\pi_\theta(y_l | x)}{\pi_{\text{ref}}(y_l | x)}
\end{equation}

The key insight of DPO is that it implicitly optimizes an equivalent objective to RLHF under the Bradley-Terry preference model. Specifically, DPO learns a policy $\pi_\theta$ such that the implicit reward $r^*(x, y) = \beta \log \frac{\pi_\theta(y | x)}{\pi_{\text{ref}}(y | x)}$ satisfies $r^*(x, y_w) > r^*(x, y_l)$ for preferred responses.

\paragraph{Connection to Cubing.} In our setting, $x$ is the CNF formula (prompt), $y$ is the reasoning trace and cube decision, and preference $y_w \succ y_l$ is determined by our cube scoring function (\cref{eq:score}).

\subsubsection{Cube Scoring Function Analysis}

Our cube scoring function from \cref{eq:score} combines rewards from both branches:
\begin{equation}
    \text{score}(l, \neg l) = \underbrace{R_l \cdot R_{\neg l}}_{\text{balance term}} + \underbrace{R_l + R_{\neg l}}_{\text{progress term}}
\end{equation}

\begin{proposition}[Symmetry]
The scoring function is symmetric: $\text{score}(l, \neg l) = \text{score}(\neg l, l)$.
\end{proposition}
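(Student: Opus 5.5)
The plan is a direct substitution argument from the definition in \cref{eq:score}. The score of an ordered pair of literals $(a,b)$ is given by the expression $R_a \cdot R_b + (R_a + R_b)$. Here $R_a$ and $R_b$ are the node rewards of \cref{eq:reward} computed on $\text{simplify}(\phi \land a)$ and $\text{simplify}(\phi \land b)$. The first step is to note that each $R_\ell$ depends only on the child state reached by asserting $\ell$, through its statistics $u,e,d,c$. It does not depend on the position in which $\ell$ appears as an argument of the score. By the Boundedness proposition, $R_\ell$ is a well-defined real number in $(0,1]$.

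Next, write out both sides explicitly.
\begin{equation*}
\text{score}(l,\neg l) = R_l R_{\neg l} + R_l + R_{\neg l}, \qquad \text{score}(\neg l, l) = R_{\neg l} R_l + R_{\neg l} + R_l .
\end{equation*}
Commutativity of real multiplication gives $R_l R_{\neg l} = R_{\neg l} R_l$, which matches the balance terms. Commutativity and associativity of real addition give $R_l + R_{\neg l} = R_{\neg l} + R_l$, which matches the progress terms. Adding these two equalities yields the claim.

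The only point that needs care is the first one: one must read $\text{score}$ as a function of two rewards, not of a literal together with its position. One should also check that the rewards are deterministic for a given child. This holds if the rollout statistics are taken as fixed recorded values, as stored in each MCTS child node. Once this is settled, the symmetry is immediate.

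As a remark, the same argument shows $\text{score}(a,b) = f(R_a,R_b)$ with $f(x,y) = xy + x + y = (1+x)(1+y) - 1$, which is a symmetric function. This factored form also confirms the stated range $(0,3]$ using Boundedness.
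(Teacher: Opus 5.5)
Your proposal is correct and takes essentially the same approach as the paper, whose proof just observes that the product $R_l \cdot R_{\neg l}$ and the sum $R_l + R_{\neg l}$ are each symmetric in their arguments. Your extra remarks, on the rewards being fixed per child and on the factored form $(1+x)(1+y)-1$ recovering the range $(0,3]$, are sound but not needed for the claim.
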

\begin{proof}
Both the product $R_l \cdot R_{\neg l}$ and sum $R_l + R_{\neg l}$ are symmetric in their arguments.
\end{proof}

\begin{proposition}[Balance Incentive]
For fixed total progress $R_l + R_{\neg l} = S$, the score is maximized when $R_l = R_{\neg l} = S/2$ (balanced branches).
\end{proposition}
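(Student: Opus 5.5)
Under the constraint $R_l + R_{\neg l} = S$, the progress term in \cref{eq:score} is the constant $S$. So the score equals $S + R_l R_{\neg l}$, and maximizing the score is the same as maximizing the product $R_l R_{\neg l}$ subject to the fixed sum. This is the classical AM--GM situation, and the whole argument reduces to it.

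\textbf{Step 1 (reparametrize).} Set $R_l = S/2 + t$ and $R_{\neg l} = S/2 - t$. Then
\[
R_l R_{\neg l} = \frac{S^2}{4} - t^2 ,
\]
and hence
\[
\text{score}(l,\neg l) = S + \frac{S^2}{4} - t^2 .
\]
This is strictly decreasing in $|t|$. Its unique maximizer is therefore $t = 0$, i.e.\ $R_l = R_{\neg l} = S/2$, where the score takes the value $S + S^2/4$. Equivalently, one can cite the identity $4R_lR_{\neg l} = (R_l+R_{\neg l})^2 - (R_l - R_{\neg l})^2$.

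\textbf{Step 2 (feasibility).} We need to check that the balanced point is admissible. By the Boundedness proposition, each reward lies in $(0,1]$, so $S \in (0,2]$, and then $S/2 \in (0,1]$ is itself an admissible reward value. The maximizer thus lies inside the feasible set. The feasible set is $\{t : S/2 \pm t \in (0,1]\}$, an interval symmetric about $0$, so restricting $t$ to it cannot change the location of the maximum.

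\textbf{Main obstacle.} The only real subtlety is this feasibility bookkeeping: making sure the constraint $R \in (0,1]$ does not exclude the balanced point. Symmetry of the feasible interval about $t=0$ disposes of it. I would also remark that the maximizer is unique (strict concavity in $t$), which strengthens the statement slightly.
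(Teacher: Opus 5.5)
Your proof is correct and takes essentially the same approach as the paper: both substitute the constraint to reduce the score to a one-variable concave quadratic ($f(t) = -t^2 + St + S$ in the paper) and locate its vertex at $R_l = R_{\neg l} = S/2$. The paper finds that vertex with the first- and second-derivative test. Your completed-square form $S + S^2/4 - t^2$ gives global optimality and uniqueness directly, and your check that $S/2 \in (0,1]$ is an admissible reward covers a point the paper leaves implicit.
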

\begin{proof}
Given constraint $R_l + R_{\neg l} = S$, let $R_l = t$ and $R_{\neg l} = S - t$. The score becomes $f(t) = t(S-t) + S = -t^2 + St + S$.
Taking the derivative: $f'(t) = -2t + S = 0 \implies t = S/2$.
The second derivative $f''(t) = -2 < 0$ confirms this is a maximum.
\end{proof}

\begin{proposition}[Score Range]
For rewards $R_l, R_{\neg l} \in (0, 1]$, the score satisfies $\text{score}(l, \neg l) \in (0, 3]$ with maximum achieved when $R_l = R_{\neg l} = 1$.
\end{proposition}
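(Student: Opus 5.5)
The plan is to treat the score as a function of two variables, $f(x,y) = xy + x + y$ with $x = R_l$ and $y = R_{\neg l}$, on the domain $(0,1]^2$. By the Boundedness proposition, both branch rewards lie in this domain. The argument then has three parts: positivity, the upper bound, and identifying where the maximum is attained.

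\emph{Positivity.} For $x, y > 0$, each of the three terms $xy$, $x$ and $y$ is strictly positive, so $f(x,y) > 0$. This gives the open lower endpoint of the interval $(0,3]$.

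\emph{Upper bound.} Use monotonicity. For fixed $y \in (0,1]$ we have $\partial f/\partial x = 1 + y > 0$, and symmetrically $\partial f/\partial y = 1 + x > 0$. So $f$ is strictly increasing in each coordinate on the domain, and $f(x,y) \le f(1,y) \le f(1,1) = 3$. A purely algebraic alternative is the identity $3 - f(x,y) = (1-x)(1+y) + (1-y)$. Here both summands are nonnegative when $x, y \le 1$. This avoids calculus and also pins down the equality case directly.

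\emph{Equality case.} The identity shows that $f(x,y) = 3$ forces $1 - y = 0$ and $(1-x)(1+y) = 0$. Since $1 + y > 0$, this gives $x = y = 1$. Conversely, by the Limiting Behavior proposition, $R = 1$ is realized whenever $d = c = 0$, so the bound is attained and the interval is closed at $3$.

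I expect no real obstacle. The only step requiring care is the equality characterization, which is why I prefer the factored identity over the monotonicity chain. I would also remark that the infimum $0$ is approached but not attained: as $d + c \to \infty$ in both branches, both rewards tend to $0$ by the Limiting Behavior proposition, so $(0,3]$ is the sharp range.
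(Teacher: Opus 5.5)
\textbf{The identity is wrong.} Your overall plan is sound and the monotonicity chain is correct, but the algebraic identity you say you prefer is false. Expanding gives $(1-x)(1+y) + (1-y) = 2 - x - xy$, whereas $3 - f(x,y) = 3 - x - y - xy$. The two differ by $1-y$. For example, at $(x,y) = (1, 1/2)$ you have $f = 2$, so $3 - f = 1$, but your right-hand side gives $1/2$. The correct decomposition is $3 - f(x,y) = (1-x)(1+y) + 2(1-y)$.

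\textbf{A cleaner fix.} Use $f(x,y) = (1+x)(1+y) - 1$, so that $3 - f = 4 - (1+x)(1+y)$ with both factors in $(1,2]$. This gives positivity, the upper bound $3$, and the equality case $x=y=1$ in one line. Your equality-case deduction goes through verbatim once $(1-y)$ is replaced by $2(1-y)$.

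\textbf{You do not need the identity at all.} Since $\partial f/\partial x = 1+y > 0$ and $\partial f/\partial y = 1+x > 0$, $f$ is \emph{strictly} increasing in each coordinate. Hence each inequality in $f(x,y) \le f(1,y) \le f(1,1) = 3$ is strict unless $x = 1$, respectively $y = 1$. So your stated reason for preferring the identity does not hold.

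\textbf{Comparison with the paper.} Once repaired, your argument is more complete than the paper's. The paper only evaluates the score at the corner $R_l = R_{\neg l} = 1$ to get $3$, and lets both rewards tend to $0^+$ to show the score approaches $0$. It never shows that $\text{score}(l,\neg l) \le 3$ or $> 0$ at an arbitrary point of $(0,1]^2$. Your termwise positivity and coordinatewise monotonicity supply exactly those global bounds. Your sharpness remark (infimum $0$ approached but not attained) matches the paper's limit argument.
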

\begin{proof}
\textit{Upper bound:} When $R_l = R_{\neg l} = 1$: $\text{score} = 1 \cdot 1 + 1 + 1 = 3$.
\textit{Lower bound:} As $R_l, R_{\neg l} \to 0^+$, both terms approach 0, so $\text{score} \to 0^+$.
\end{proof}

\subsubsection{Comparison with Alternative Reward Designs}

We considered several alternative reward formulations:

\paragraph{Additive Reward.} $R_{\text{add}} = \alpha(u + e) - \beta(d + c)$.
This linear formulation lacks boundedness and can produce unbounded negative rewards, destabilizing training.

\paragraph{Multiplicative Reward.} $R_{\text{mult}} = \frac{u + e + 1}{d + c + 1}$.
While bounded below by 0, this can grow unboundedly large (depending on the formula's number of clauses) when $d = c = 0$ but $u, e$ are large.

\paragraph{Time-Based Reward.} $R_{\text{time}} = \frac{1}{1 + t_{\text{solve}}}$.
Directly using solve time $t_{\text{solve}}$ introduces high variance due to system noise, caching effects, and non-determinism in parallel solvers.

Our log-ratio formulation (\cref{eq:reward}) combines the benefits of boundedness, stability, and monotonicity while avoiding the drawbacks of these alternatives.

\subsection{Quantitative Analysis of Decision Behaviour}~\label{subsec:quantitative-behavior}

The heuristic classification in \cref{tab:heuristic-classification} summarises \emph{what the model says} it is doing, based on a post-hoc LLM-judge reading of the reasoning text. In this section we examine \emph{what the model actually does}, by parsing the DIMACS prompt of each cubing decision and comparing the chosen variable against structural properties of the formula. All analyses focus on the first-cube (root) decision, the only decision point where every heuristic faces an identical formula across models, and use the 100 test benchmarks (50 SAT, 50 UNSAT) across 5 evaluation runs. In addition to our main model, we include baselines from every heuristic reported in \cref{tab:pass_at_k_scores}: the Qwen3-4B base model, the SFT-only and DPO-only ablations, two teacher candidates (GPT-OSS-20B, GPT-OSS-120B), a larger LLM baseline (Qwen3-32B), and six symbolic heuristics (\emph{unit}, \emph{heule\_schur}, \emph{march\_cu}, \emph{heuleu}, \emph{ternary}, \emph{random}).

\subsubsection{Do the chosen variables match the dominant heuristic (H1)?}~\label{subsec:freq-rank}

We count the clause-occurrence frequency of every variable in each CNF and rank the chosen first-cube variable by that count. A rank percentile of 0 means the chosen variable is the most frequent in the formula; 0.5 is the median; a uniformly-random selector over the variable index space would have mean 0.5.

\Cref{fig:var-freq-rank} shows the distribution for Qwen3-4B-SFT-DPO alone. 91\% of SAT first-cube choices and 73\% of UNSAT first-cube choices lie in the top 10\% most-frequent variables (median rank percentile 0.000 and 0.004 respectively; overall mean 0.08). Taken in isolation, this number appears to confirm the H1 (Variable Frequency Analysis) majority in \cref{tab:heuristic-classification} at the level of realised decisions. However, the same measurement on the baselines and symbolic heuristics (\cref{tab:first-cube-comparison}) reveals that the top-10\% behaviour is essentially universal across LLMs: the untrained Qwen3-4B base sits at 79\%, the GPT-OSS-120B teacher at 90\%, and even \emph{random} reaches 74\% because clause-occurrence frequency is heavy-tailed and hitting the top 10\% is geometrically common. Symbolic lookahead heuristics explicitly designed around clause weighting (\emph{unit}, \emph{heule\_schur}, \emph{march\_cu}, etc.) cluster at 85--86\%. We therefore read this metric as a \emph{calibration result}: all LLMs we measured and our trained model in particular operate well inside the same ``top-frequency tail'' as established symbolic heuristics, rather than as evidence of a behaviour that post-training introduced.

\begin{figure}[h!]
\centering
\includegraphics[width=0.9\linewidth]{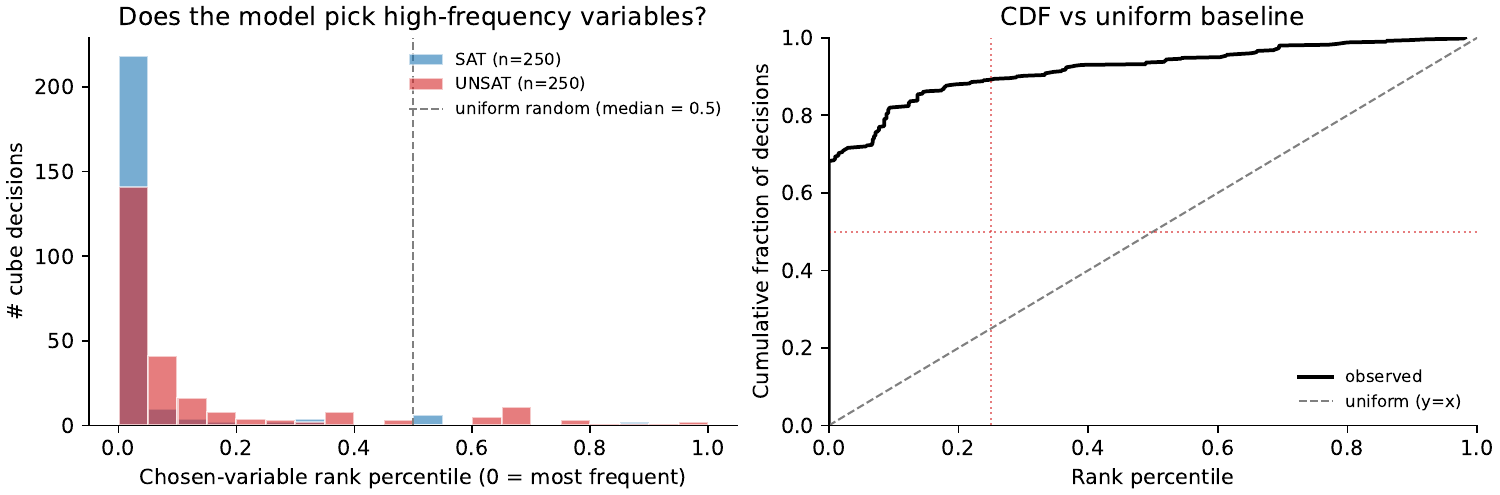}
\caption{Clause-occurrence rank percentile of the first-cube variable chosen by Qwen3-4B-SFT-DPO, pooled over 5 runs $\times$ 100 benchmarks (250 SAT + 250 UNSAT decisions). \emph{Left}: histogram. \emph{Right}: cumulative distribution versus a uniform-random baseline. 91\% of SAT and 73\% of UNSAT choices lie in the top 10\% of variables by clause frequency. \Cref{tab:first-cube-comparison} shows that this behaviour is largely shared by other LLMs and by symbolic baselines.}
\label{fig:var-freq-rank}
\end{figure}

\subsubsection{Where does solution diversity come from? A cross-heuristic comparison}~\label{subsec:diversity}

\Cref{tab:extended-success-statistics} shows that Qwen3-4B-SFT-DPO solves 53 unique benchmarks across 5 runs while averaging 47.4 per run---the gap is evidence that different runs solve different benchmarks. We quantify the diversity that produces this gap by (i) measuring the Shannon entropy of the first-cube variable across the 5 runs per benchmark (upper bound $\log_2 5 \approx 2.32$ bits), and (ii) computing the mean pairwise run agreement---the fraction of benchmarks on which two different runs pick the same first-cube variable. All LLMs in this comparison share an identical inference configuration (same temperature and top-$p$ settings on the same vLLM server), so any diversity differences between models reflect training, not decoding-time sampling. Contrasting our model against the same set of baselines used in \cref{subsec:freq-rank} isolates which training stage unlocks diversity.

\Cref{tab:first-cube-comparison} reports both axes for every heuristic, and \cref{fig:mechanism} in the main body plots them together. Three clear regimes emerge:

\textbf{Symbolic heuristics} (\emph{unit}, \emph{heule\_schur}, \emph{march\_cu}, \emph{heuleu}, \emph{ternary}) cluster tightly at mean entropy 0.23--0.25 bits and run agreement 86--88\%. Apart from tie-breaking, these heuristics are deterministic, which bounds how much portfolio benefit any repeated run can extract from them. \emph{random} sits at the opposite extreme (entropy 2.31, agreement 0\%).

\textbf{Base and DPO-only LLMs} behave almost symbolically: the untrained Qwen3-4B has entropy 0.55 and agreement 70\%, and the DPO-only ablation (Qwen3-4B-DPO) is nearly identical at 0.59 / 68\%. These models are near-deterministic at the root in the sense that their temperature-sampled decodes usually resolve to the same first cube across runs.

\textbf{SFT-based LLMs} are genuinely diverse. Qwen3-4B-SFT and Qwen3-4B-SFT-DPO reach entropy 1.93 and 1.78 with run agreement 12\% and 18\% respectively---more diverse than either the 20B teacher candidate (1.29 / 36\%), the 120B teacher (1.13 / 44\%), or the 32B baseline (1.17 / 42\%). DPO after SFT slightly narrows the diversity (1.93 $\rightarrow$ 1.78) without collapsing it; this is consistent with the DPO objective concentrating probability on high-reward variables while still leaving meaningful mass on alternatives.

Because base-only and DPO-only ablations share the base model's near-deterministic root behaviour, SFT is the training stage that produces the diversity observed in our main model. This gives a specific mechanistic interpretation for why Qwen3-4B-SFT-DPO contributes complementary coverage in a portfolio setting while symbolic heuristics cannot: the symbolic heuristics' agreement ceiling of 86--88\% means five independent runs mostly re-solve the same benchmarks, whereas our model's 18\% agreement spreads five runs over largely disjoint first moves.

\Cref{fig:agreement-vs-gain} tests the connection between root-level diversity and portfolio gain directly. For each heuristic in \cref{tab:pass_at_k_scores}, we plot its mean pairwise run agreement against its \emph{portfolio gain}, defined as the difference between \passat{5} (the union of benchmarks solved across 5 runs) and the per-run mean (the average number of benchmarks solved in a single run). Portfolio gain measures the extra benchmarks that a 5-run portfolio recovers over a single run: for example, Qwen3-4B-SFT-DPO solves 47.4 benchmarks per run on average but 53 unique benchmarks across 5 runs, for a gain of $+5.6$; the symbolic \emph{unit} heuristic solves 51.6 per run and 53 across 5 runs, for a gain of only $+1.4$. Both reach the same \passat{5}, but through different mechanisms: \emph{unit} by solving a strong, stable set every run, and Qwen3-4B-SFT-DPO by solving different, complementary sets across runs. The relationship is negative (Pearson $r=-0.52$ across 13 heuristics): heuristics with lower root-level agreement tend to extract more benefit from additional attempts. This supports the claim that the root-level diversity produced by SFT is what translates into the complementary per-run coverage observed in the main body.

\begin{figure}[h!]
\centering
\includegraphics[width=0.75\linewidth]{figures/agreement_vs_portfolio_gain.pdf}
\caption{Root-level decision diversity versus portfolio gain, one point per heuristic in \cref{tab:pass_at_k_scores}. \emph{Horizontal}: mean pairwise run agreement on the first-cube variable (0 = every pair of runs picks a different variable, 1 = every pair picks the same variable). \emph{Vertical}: \passat{5} $-$ per-run mean, i.e.\ the extra benchmarks a 5-run portfolio recovers over a single run. Heuristics with lower agreement tend to gain more from repeated attempts (Pearson $r = -0.52$). Symbolic heuristics cluster in the deterministic / low-gain corner (5 runs re-solve the same set); SFT-based LLMs occupy the diverse / high-gain corner (5 runs solve different, complementary sets).}
\label{fig:agreement-vs-gain}
\end{figure}

\begin{table}[h]
\centering
\scriptsize
\caption{First-cube measurements for every heuristic reported in \cref{tab:pass_at_k_scores}. \emph{Top 10\%} and \emph{Top 25\%}: fraction of first-cube choices whose chosen variable is within the top 10\% and top 25\% of the CNF's variables by clause-occurrence frequency. \emph{Entropy}: mean Shannon entropy of the first-cube variable across 5 runs per benchmark (max attainable $\log_2 5 \approx 2.32$). \emph{Agreement}: mean pairwise run agreement on the first-cube variable. All rows pool 5 runs over the 100 test benchmarks. (The median rank percentile is 0.000 for all methods.)}
\label{tab:first-cube-comparison}
\vspace{0.4em}
\resizebox{0.85\linewidth}{!}{
\begin{tabular}{@{}llrrrr@{}}
\toprule
\textbf{Family} & \textbf{Heuristic / Model} & \textbf{Top 10\%} & \textbf{Top 25\%} & \textbf{Entropy} & \textbf{Agreement} \\
\midrule
LLM & Qwen3-4B (base) & 79.0\% & 87.4\% & 0.55 & 70.4\% \\
LLM & Qwen3-4B-SFT & 79.8\% & 86.0\% & 1.93 & 12.0\% \\
LLM & Qwen3-4B-DPO & 81.7\% & 89.8\% & 0.59 & 67.8\% \\
LLM & Qwen3-4B-SFT-DPO & 78.8\% & 85.0\% & 1.78 & 18.1\% \\
LLM & GPT-OSS-20B & 84.0\% & 89.4\% & 1.29 & 36.2\% \\
LLM & GPT-OSS-120B (teacher) & 89.8\% & 93.8\% & 1.13 & 44.0\% \\
LLM & Qwen3-32B & 88.6\% & 91.6\% & 1.17 & 42.3\% \\
\midrule
Symbolic & unit & 85.6\% & 87.8\% & 0.23 & 87.6\% \\
Symbolic & heule\_schur & 85.4\% & 88.0\% & 0.24 & 86.7\% \\
Symbolic & march\_cu & 85.0\% & 87.2\% & 0.25 & 86.3\% \\
Symbolic & heuleu & 85.2\% & 87.4\% & 0.24 & 86.8\% \\
Symbolic & ternary & 85.4\% & 88.0\% & 0.22 & 87.8\% \\
Symbolic & random & 73.6\% & 80.4\% & 2.31 & 0.4\% \\
\bottomrule
\end{tabular}
}
\end{table}

\paragraph{Per-benchmark diversity view.} \Cref{fig:var-run-agreement} complements the cross-heuristic comparison with a per-benchmark view restricted to Qwen3-4B-SFT-DPO. On half of the 100 benchmarks the most-popular first-cube variable is picked by only 1 of 5 runs; 27 benchmarks have a 2/5 mode, 18 have a 3/5 mode, 5 have a 4/5 mode, and no benchmark has 5/5 agreement.

\begin{figure}[ht]
\centering
\includegraphics[width=\linewidth]{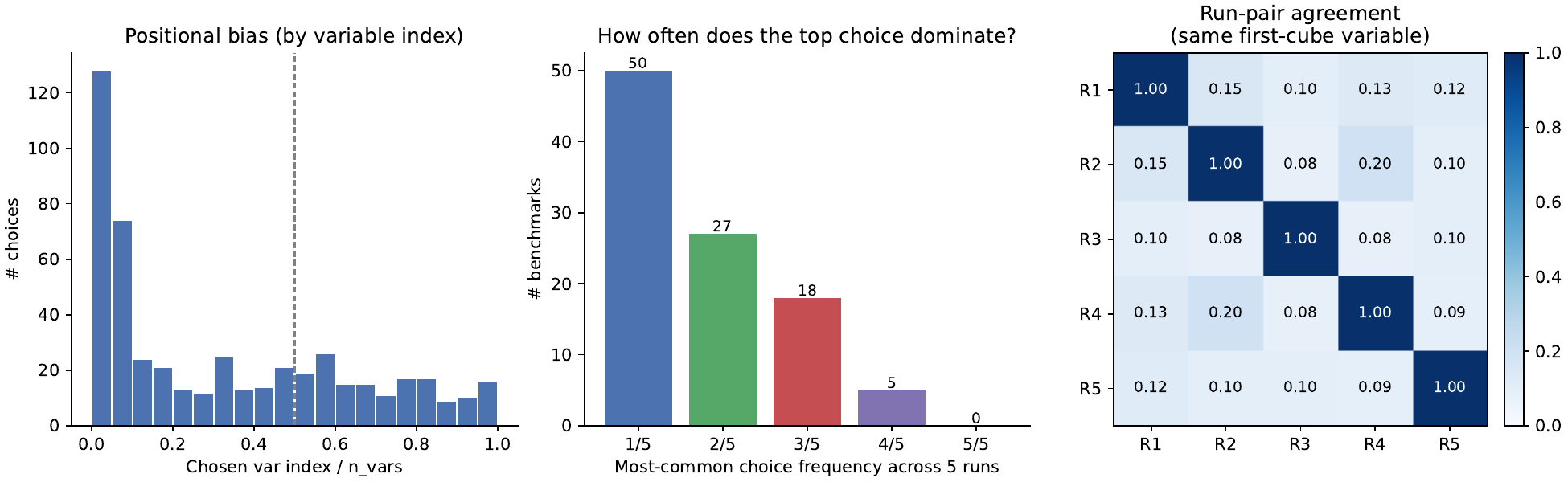}
\caption{Per-benchmark diversity of the first-cube variable chosen by Qwen3-4B-SFT-DPO across 5 evaluation runs. \emph{Left}: positional bias, i.e. the distribution of chosen variable index divided by the total number of variables in the formula. \emph{Middle}: how often the most-common first-cube variable dominates the 5 runs: on 50/100 benchmarks the top choice wins only 1 of 5 runs; no benchmark has 5/5 agreement. \emph{Right}: pairwise run-agreement matrix (student only)---all off-diagonal entries are 8--20\%, agreeing with the mean 18\% reported in \cref{tab:first-cube-comparison}.}
\label{fig:var-run-agreement}
\end{figure}

\subsubsection{Is the H1--H12 classification confounded with decoding style?}~\label{subsec:lexical-style}

The H1--H12 labels in \cref{tab:heuristic-classification} are assigned from reasoning text using an LLM-as-a-judge. To test whether these labels reflect genuine reasoning differences or are an artifact of decoding variability, we ran an unsupervised clustering on 2000 reasoning traces (400 from each of the 5 evaluation runs, balanced over SAT and UNSAT, numerical tokens masked to prevent benchmark-specific leakage). TF-IDF features (unigrams and bigrams) were reduced to 50 dimensions via truncated SVD and clustered with $k$-means for $k \in \{3, \ldots, 10\}$, selecting the best $k$ by silhouette score.

The best silhouette score, 0.189 at $k=9$, is weak (silhouette $> 0.5$ typically indicates well-separated clusters). More importantly, the clusters that \emph{do} emerge (\cref{fig:unsup-clusters}, \cref{tab:unsup-clusters}) align almost perfectly with the run identity rather than with heuristic categories: for example, one 399-trace cluster is 100\% from Run 5 (distinguishing vocabulary: ``propagation'', ``unit'', ``activity'', ``neighbor''), another 397-trace cluster is 99.7\% from Run 4 (``simplified'', ``balanced'', ``structural properties''), and a 399-trace cluster is 100\% from Run 2 (``solve'', ``input'', ``num true''). Each decoding seed produces its own characteristic surface vocabulary.

This finding tempers the interpretation of \cref{tab:heuristic-classification}: the H1--H12 categories capture strategies invoked in the reasoning text, but per-trace lexical style is driven as much by decoding variance as by underlying decision policy. Because our main behavioural claim (\cref{subsec:diversity}) rests on realised first-cube variables rather than on reasoning text, it is unaffected by this confound.

\begin{figure}[ht]
\centering
\includegraphics[width=0.62\linewidth]{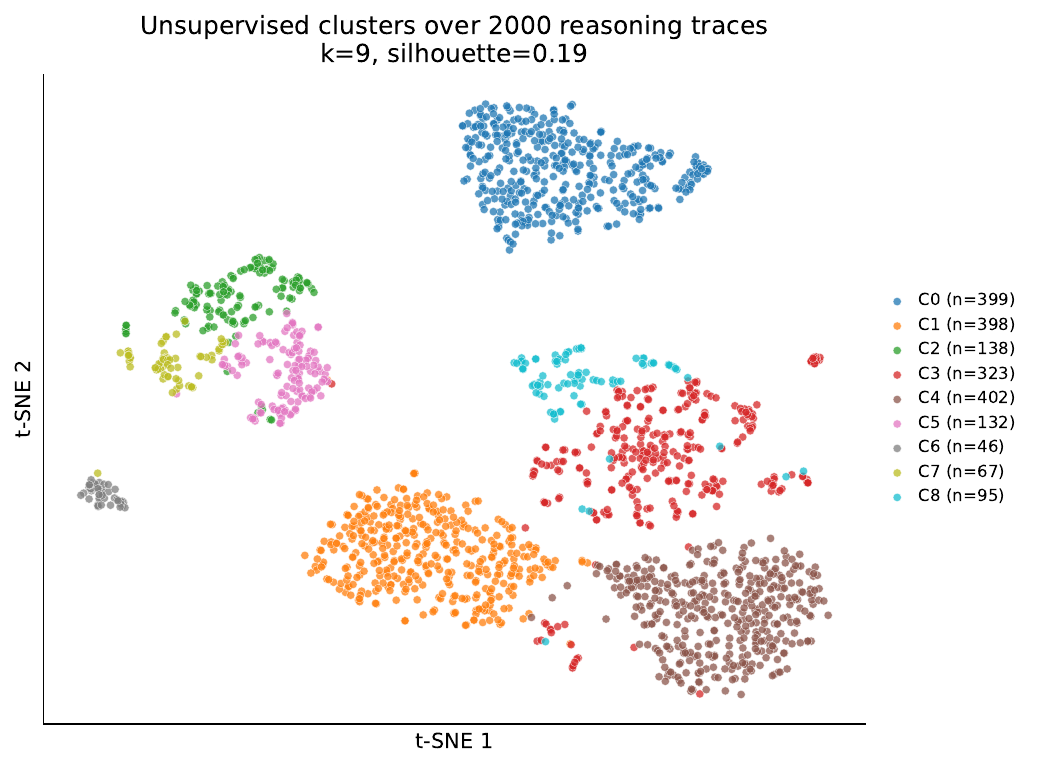}
\caption{Unsupervised clustering of 2000 reasoning traces (400 per run, SAT/UNSAT balanced, numerical tokens masked). TF-IDF features reduced via truncated SVD, $k$-means with $k{=}9$ chosen by silhouette (score 0.19). 2D t-SNE projection coloured by cluster. \Cref{tab:unsup-clusters} lists the per-cluster distinguishing terms.}
\label{fig:unsup-clusters}
\end{figure}

\begin{table}[h]
\centering
\small
\caption{Per-cluster size, dominant decoding run, and top eight distinguishing terms (relative to the global mean) for the unsupervised clustering in \cref{fig:unsup-clusters}. The dominant run accounts for 94--100\% of every cluster, indicating that surface vocabulary tracks decoding seed more than reasoning strategy.}
\label{tab:unsup-clusters}
\vspace{0.4em}
\small
\begin{tabular}{@{}rrlp{0.62\linewidth}@{}}
\toprule
\multirow{2}{*}{\textbf{C}} & \multirow{2}{*}{\textbf{n}} & \textbf{Dom.} & \textbf{Top distinguishing terms} \\
 & & \textbf{run} & \textbf{(vs.\ global mean)} \\
\midrule
C4 & 402 & Run 2 & solve, input, num true, cnf input, variable negation, resulting, resulting subformulas, pair \\
C0 & 399 & Run 5 & propagation, unit, activity, neighbor, unit propagation, neighbor activity, potential, short \\
C1 & 398 & Run 4 & simplified, difficulty, structural, properties, balanced simplified, balanced, structural properties, impact \\
C3 & 323 & Run 1 & frequent variable, appears clauses, simplifies, clauses fixing, fixing, fixing true, assigning true, easiest subformulas \\
C2 & 138 & Run 3 & negative num, positive num, num total, num positive, num negative, total clauses, total, num variable \\
C5 & 132 & Run 3 & positive num, num negative, satisfies num, satisfies, appears num, num positive, reduction, negative num \\
C8 & 95 & Run 1 & high clause, variables high, high, clause frequency, likely, clause frequencies, clause counts, likely split \\
C7 & 67 & Run 3 & occurrences num, positive occurrences, total clauses, negative occurrences, num percentage, num total, percentage, total \\
C6 & 46 & Run 3 & neg, pos, num pos, neg num, pos num, num neg, num freq, freq num \\
\bottomrule
\end{tabular}

\end{table}

\subsection{Learned Heuristic Classification of Reasoning Traces}~\label{subsec:learned-heuristic-analysis}

To understand what strategies our model invokes in its reasoning text, we analyzed 388 output traces (191 SAT, 197 UNSAT) produced by Qwen3-4B-SFT-DPO and classified them into established SAT heuristic categories. An LLM-as-a-judge (Claude-Sonnet-4.5) parsed each trace, which typically contains 1--3 sentences of reasoning followed by a variable decision. The classifier prompt (\cref{subsec:heuristic-classification-prompt}) includes an explicit primary-heuristic selection rule and five worked few-shot examples drawn from traces where two independent human annotators agreed on the primary label. Since traces often exhibit multiple heuristics, the classifier assigns all applicable categories plus a primary label. To validate this automated classification, two human experts independently labeled a random sample of 50 traces; the LLM judge's Cohen's $\kappa$ with each human (0.45 and 0.48) lands in the moderate-agreement range and is slightly above inter-human agreement itself (0.37), indicating that the LLM judge performs within the range of human disagreement (\cref{subsec:inter-rater}).


\begin{table*}[ht]
\centering
\small
\caption{Heuristic Classification of (388) Reasoning Traces of Qwen3-4B-SFT-DPO}
\label{tab:heuristic-classification}
\begin{tabular}{@{}llrrrl@{}}
\toprule
ID & Heuristic & Count & \% & Primary & Literature \\
\midrule
H1 & Variable Frequency Analysis & 285 & 73.5 & 192 & \cite{moskewicz2001chaff} \\
H2 & Balanced Subformula Splitting & 170 & 43.8 & 60 & \cite{heule2011cube} \\
H6 & Subformula Complexity Reduction & 169 & 43.6 & 37 & \cite{heule2011cube} \\
H7 & Polarity Balance Analysis & 89 & 22.9 & 59 & \cite{silva1996grasp} \\
H5 & Structural Pattern Recognition & 33 & 8.5 & 9 & \cite{ansotegui2019community} \\
H4 & Clause-Size Weighting & 32 & 8.2 & 3 & \cite{jeroslow1990solving} \\
H3 & Clause Elimination Reasoning & 30 & 7.7 & 5 & \cite{davis1962machine} \\
H8 & Critical Clause Identification & 20 & 5.2 & 7 & \cite{freeman1995improvements} \\
H12 & Other & 14 & 3.6 & 11 & — \\
H9 & Unit Propagation Potential & 9 & 2.3 & 5 & \cite{le2001exploiting} \\
H10 & Symmetry Detection & 4 & 1.0 & 0 & \cite{katebi2010symmetry} \\
\bottomrule
\end{tabular}

\end{table*}

The dominant heuristic is \emph{Variable Frequency Analysis}~(H1, 73.5\% of traces, primary in 192 of 388), which counts variable occurrences across clauses. The next most frequent are \emph{Balanced Subformula Splitting}~(H2, 43.8\%) and \emph{Subformula Complexity Reduction}~(H6, 43.6\%), followed by \emph{Polarity Balance}~(H7, 22.9\%). The remaining heuristics are less frequent ($<9\%$); concrete examples for each are in \cref{subsec:example-traces}. This classification shows that the model's reasoning text invokes a diverse set of established SAT heuristics, with H1 being the default strategy without being explicitly prompted. \Cref{subsec:freq-rank,subsec:lexical-style} provide two caveats on the interpretation of these counts: the H1 top-frequency behavior is shared by every LLM we measured (not training-specific), and unsupervised clustering of the reasoning text shows that per-trace surface vocabulary partly reflects decoding seed rather than reasoning strategy.

\subsection{Benchmark Family Distribution}~\label{subsec:benchmark-families}

\Cref{tab:benchmark-families} reports the per-family breakdown of the 100 test benchmarks classified by SAT-Competition naming convention. Families were inferred from the Global Benchmark Database filename prefix; the classification script is released with the code. The distribution is weighted toward hard random 3-SAT (34\%, mostly \emph{sgen}) and XOR / modular-arithmetic (19\%), the instance classes for which Cube-and-Conquer was originally developed~\citep{heule2011cube}. The SAT split is more concentrated ($28/50$ \emph{sgen} + 6 Barthel + 3 random-graph = 74\% hard-random) than the UNSAT split, which spans ten families without any dominating above 18\%.

\begin{table}[h]
    \centering
    \small
    \caption{Per-family distribution of the 100 test benchmarks, inferred from SAT-Competition filename prefixes.}
    \label{tab:benchmark-families}
    \vspace{0.4em}
\begin{tabular}{@{}lrrr@{}}
\toprule
\textbf{Family} & \textbf{SAT} & \textbf{UNSAT} & \textbf{Total} \\
\midrule
Hard random 3-SAT (sgen / hgen) & 28 & 6 & 34 \\
XOR / modular-arithmetic & 11 & 8 & 19 \\
Graph encodings (edges / fixedbandwidth / random graphs) & 3 & 9 & 12 \\
Urquhart (constraint graph) & 1 & 9 & 10 \\
Combinatorial puzzles (pigeonhole / battleship) & 1 & 5 & 6 \\
Frustrated-loops (Barthel / glassy) & 6 & 0 & 6 \\
Cryptographic (Hidden Weighted Bit) & 0 & 5 & 5 \\
Margulis (expander graph) & 0 & 4 & 4 \\
ISCAS-like circuit & 0 & 3 & 3 \\
Model-counting derived & 0 & 1 & 1 \\
\midrule
\textbf{Total} & 50 & 50 & 100 \\
\bottomrule
\end{tabular}

\end{table}

\subsection{SAT vs UNSAT Performance Analysis}~\label{subsec:sat-vs-unsat}
\label{subsec:sat-bias}

To investigate whether heuristics exhibit bias toward SAT or UNSAT instances, we perform paired t-tests comparing each heuristic's SAT performance (out of 50) versus UNSAT performance (out of 50) across 5 runs. \Cref{tab:sat-unsat-ttest} reports the results.

\begin{table}[h]
    \centering
    \small
    
        \caption{Paired t-test: SAT vs UNSAT performance per heuristic (5 runs). Significant results ($p < 0.05$) are bolded.}
        \label{tab:sat-unsat-ttest}
        \vspace{0.5em}
        \begin{tabular}{@{}lccccl@{}}
            \toprule
            \textbf{Heuristic} & \textbf{SAT avg} & \textbf{UNSAT avg} & \textbf{Diff} & \textbf{$p$-value} & \textbf{Bias} \\
            \midrule
            Qwen3-4B           & 21.2             & 21.6               & $-0.4$        & 0.689              &               \\
            GPT-OSS-120B       & 20.8             & 22.0               & $-1.2$        & \textbf{0.033}     & UNSAT         \\
            Claude-3.7-Sonnet  & 22.0             & 22.6               & $-0.6$        & 0.426              &               \\
            Qwen3-32B          & 21.0             & 22.0               & $-1.0$        & 0.089              &               \\
            Claude-Sonnet-4    & 23.8             & 23.0               & $+0.8$        & 0.294              &               \\
            random             & 22.6             & 23.0               & $-0.4$        & 0.717              &               \\
            heule\_schur       & 26.4             & 25.0               & $+1.4$        & \textbf{0.025}     & SAT           \\
            march\_cu          & 26.6             & 25.0               & $+1.6$        & \textbf{0.003}     & SAT           \\
            unit               & 26.4             & 25.2               & $+1.2$        & \textbf{0.033}     & SAT           \\
            Qwen3-4B-DPO       & 19.8             & 21.8               & $-2.0$        & \textbf{0.003}     & UNSAT         \\
            Qwen3-4B-SFT       & 22.6             & 23.4               & $-0.8$        & 0.242              &               \\
            Qwen3-4B-SFT-DPO   & 23.8             & 23.6               & $+0.2$        & 0.778              &               \\
            \bottomrule
        \end{tabular}
    
\end{table}

Most heuristics show no significant difference between SAT and UNSAT performance (overall pooled: SAT avg=22.4, UNSAT avg=22.8, $p=0.236$). Five exceptions emerge: (1)~\emph{unit} (symbolic) performs significantly better on SAT instances (26.4 vs 25.2, $p=0.033$); (2)~\emph{heule\_schur} (symbolic) also leans SAT (26.4 vs 25.0, $p=0.025$); (3)~\emph{march\_cu} (symbolic) leans SAT most strongly (26.6 vs 25.0, $p=0.003$), suggesting the symbolic scoring functions are optimized for satisfiable formula structure; (4)~GPT-OSS-120B shows a slight UNSAT bias (20.8 vs 22.0, $p=0.033$); and (5)~Qwen3-4B-DPO (DPO only, no SFT) has a strong UNSAT bias (19.8 vs 21.8, $p=0.003$). Notably, our main model Qwen3-4B-SFT-DPO is well-balanced (SAT=23.8, UNSAT=23.6, $p=0.778$), suggesting that the combination of SFT and DPO produces a heuristic without systematic category bias. The SAT bias of the three symbolic lookahead heuristics is consistent with the test-set composition reported in \cref{tab:benchmark-families}: the SAT split is 74\% hard random 3-SAT variants (\emph{sgen}, Barthel, random-graph), a class where lookahead heuristics like these typically outperform CDCL alone.

\subsection{Prompts Used}~\label{subsec:prompts-used}

Note that in the following prompts we use the term \emph{cube} to mean the decision variable as well, inspired by the Z3's documentation \citep{de2008z3}.

\subsubsection{System Prompt}~\label{subsec:system-prompt}

We use the following system prompt for the interaction with the different models. The model is expected to respond using two sections: a reasoning section containing analysis, and an answer section with a well-formatted cube. The answer is a parenthesized pair of two opposite integers. For example, the answer $(-1, 1)$ indicates that the splitting variable is $1$, so the subformulas are derived by substituting the variable $1$ as True and False.

\begin{tcolorbox}[
    colback=gray!5!white,
    colframe=gray!75!black,
    fonttitle=\bfseries\small,
    title={System Prompt},
    breakable,
    sharp corners,
    boxrule=0.5pt,
    left=2mm, right=2mm, top=2mm, bottom=2mm,
]
\small
\begin{verbatim}
You are a SAT solving expert focusing on Cube and Conquer algorithm.
You are known to follow unconventional thinking patterns in order to derive
the best cube. Your goal is to suggest the best cube that reduces a given formula
to the easiest subformulas. A cube is a pair of a variable and its negation,
like (v, -v) or (-v, v). You are allowed to respond only in the following format
and do not forget to include all opening and closing XML tags in your response:
<reasoning>
reasoning
</reasoning>
<answer>
answer
</answer>
\end{verbatim}
\end{tcolorbox}

\subsubsection{Task Prompt}~\label{subsec:task-prompt}

The following prompt accompanies each CNF formula input:

\begin{tcolorbox}[
    colback=gray!5!white,
    colframe=gray!75!black,
    fonttitle=\bfseries\small,
    title={Task Prompt},
    breakable,
    sharp corners,
    boxrule=0.5pt,
    left=2mm, right=2mm, top=2mm, bottom=2mm
]
\small
\begin{verbatim}
Read the DIMACS CNF input. The first line is in the format
'p cnf <num_vars> <num_clauses>'. Each subsequent line represents a clause,
ending with a '0' which marks the end of that clause. Clauses consist of boolean
variables, notated as numbers in the range [-num_vars, num_vars]. The cube you
choose will be used as part of the Cube and Conquer algorithm to reduce the
problem to two subformulas. Your task is to read the CNF input and select the
cube that creates the easiest to solve subformulas. For the reasoning respond
with as few sentences as possible and for the answer respond with exactly one
cube, like (v, -v) or (-v, v).
\end{verbatim}
\end{tcolorbox}

\subsubsection{DPO Reasoning Generation Prompt}~\label{subsec:dpo-reasoning-prompt}

For generating reasoning trajectories in our DPO training data, we use the following prompt with the teacher model:

\begin{tcolorbox}[
    colback=gray!5!white,
    colframe=gray!75!black,
    fonttitle=\bfseries\small,
    title={DPO Reasoning Generation Prompt},
    breakable,
    sharp corners,
    boxrule=0.5pt,
    left=2mm, right=2mm, top=2mm, bottom=2mm
]
\small
\begin{verbatim}
You are a SAT solving expert focusing on Cube and Conquer algorithm. You are to
take a CNF formula and a cube and reason about why the cube is a good cube to
consider. Structure your reasoning output as though you have chosen the cube to
be the best cube for formula simplification. Be thorough in your reasoning and
consider all possible implications of the cube, potentially with specific
examples/explanations/reductions that are impactful.
\end{verbatim}
\end{tcolorbox}

\subsubsection{Heuristic Classification Prompt}~\label{subsec:heuristic-classification-prompt}

For classifying the obtained reasoning traces, we use the following prompt:

\begin{tcolorbox}[
    colback=gray!5!white,
    colframe=gray!75!black,
    fonttitle=\bfseries\small,
    title={Heuristic Classification Prompt (v2, few-shot)},
    breakable,
    sharp corners,
    boxrule=0.5pt,
    left=2mm, right=2mm, top=2mm, bottom=2mm
]
\small
\begin{verbatim}
You are a SAT solving expert analyzing reasoning traces from a neural model
trained for Cube-and-Conquer decisions.

## Heuristic Taxonomy
Classify the trace into ONE OR MORE of these established SAT heuristics:

{taxonomy_table}

## How to select the PRIMARY heuristic

The primary heuristic is the ONE category that best explains *why THIS
specific variable was chosen* in the trace's own reasoning. Use these rules:

1. Do not default to H1. H1 (Variable Frequency) applies only when the
   trace's stated reason is raw frequency ("appears in many clauses",
   "most occurrences"). If the trace mentions frequency but frames its
   decision around another mechanism (balance, polarity, simplification,
   critical clauses), the primary is that other mechanism.

2. Prefer the more specific category when two apply.
   - H1 vs H2: H2 wins if the trace mentions "balanced", "evenly",
     "two equal subformulas", "split the formula in half".
   - H1 vs H6: H6 wins if the trace's main argument is "simplifies /
     reduces complexity of subformulas" without naming frequency as the
     reason.
   - H1 vs H7: H7 wins if the trace emphasises appearing in both positive
     AND negative polarity (not just total frequency).
   - H1 vs H8: H8 wins if the trace names clauses as "critical",
     "important", or "key" rather than treating the variable itself as
     the unit of analysis.
   - H1 vs H9: H9 wins if the trace mentions unit propagation, cascading
     implications, or forced assignments.

3. For multiple primary candidates, pick the one the trace states FIRST
   as its reasoning. The primary is the first-mentioned mechanism, not
   the most general one.

## Worked examples (calibration)

These five examples show the correct PRIMARY label for traces whose
reasoning is representative of each category. Two independent human
annotators agreed on each primary label.

Example 1 (primary: H1 - Variable Frequency)
Trace: "Variables with high clause frequencies are likely to be pivotal.
47 appears in many clauses, so fixing it simplifies the formula
significantly."
Correct primary: H1.
Why: the trace's stated mechanism is raw frequency. "Simplifies the
formula" is a consequence, not the reason.

Example 2 (primary: H2 - Balanced Subformula Splitting)
Trace: "The cube that reduces the CNF input to the easiest subformulas is
(-36, 36). This choice splits the problem into two balanced and simple
subformulas."
Correct primary: H2.
Why: the trace explicitly states "balanced" and "two subformulas" as the
mechanism. Frequency is not mentioned.

Example 3 (primary: H6 - Subformula Complexity Reduction)
Trace: "The cube that reduces the problem to the easiest subformulas is
(10, -10)."
Correct primary: H6.
Why: the only claimed mechanism is simplification of the resulting
subformulas. No frequency, no balance, no polarity.

Example 4 (primary: H7 - Polarity Balance Analysis)
Trace: "The most balanced variable in the CNF is -143, which appears in
20 clauses as both a positive and negative literal. Assigning it to true
or false will evenly split the formula into two subformulas that are
likely to be easy to solve."
Correct primary: H7.
Why: the first and stated reason is "both a positive and negative
literal" (polarity balance). H2 and H1 are secondary.

Example 5 (primary: H8 - Critical Clause Identification)
Trace: "The most impactful variable to eliminate is 107, as it appears in
multiple critical clauses and its negation is not present in many of
them. Assigning 107 to true reduces a large number of complex clauses,
simplifying the problem significantly."
Correct primary: H8.
Why: the trace names "critical clauses" as the mechanism.

## Reasoning Trace to Classify:
```
{trace}
```

## Task:
Identify which heuristics from the taxonomy are demonstrated in this
reasoning trace, and select the PRIMARY heuristic following the selection
rules above.

IMPORTANT:
- A trace may use multiple heuristics (list all that apply).
- The PRIMARY must be the single category that best explains why THIS
  variable was chosen (see rules).
- Provide brief quote evidence for each heuristic identified.
- If something doesn't fit the taxonomy, use H12 (Other) and describe it.

Respond with JSON only:
{{
    "heuristics": ["H1", "H4"],
    "primary_heuristic": "H1",
    "evidence": {{
        "H1": "brief quote showing this heuristic",
        "H4": "brief quote"
    }},
    "reasoning_quality": "high|medium|low",
    "notes": "optional observations"
}}
\end{verbatim}
\end{tcolorbox}

\subsection{Reasoning Traces and Heuristic Classification Examples}~\label{subsec:example-traces}

We present five example reasoning traces generated by our trained model (Qwen3-4B-SFT-DPO) during the cubing phase, alongside their classification from Claude-Sonnet-4. These traces contain all of the heuristic categories and have diverse quality (from high to low) according to Claude-Sonnet-4. The mapping of heuristics to examples is: H1 (ex. 2, 3), H2 (ex. 1, 2), H3 (ex. 3), H4 (ex. 4), H5 (ex. 1), H6 (ex. 2), H7 (ex. 3), H8 (ex. 2), H9 (ex. 4), H10 (ex. 1), H12 (ex. 5).

\begin{tcolorbox}[
    colback=gray!5!white,
    colframe=green!75!black,
    fonttitle=\bfseries\small,
    title={Reasoning Trace and Heuristic Classification Example 1},
    breakable,
    sharp corners,
    boxrule=0.5pt,
    left=2mm, right=2mm, top=2mm, bottom=2mm
]
\small
\begin{verbatim}
Reasoning Trace:

<reasoning>
The CNF is composed of a large number of clauses, each with only two variables
negated. The structure is highly symmetric and forms complete bipartite graphs
between groups of 144 variables. Each clause connects one variable from the
first group to one from the second group in a fixed pattern. The cube that
creates the easiest subformulas is the  one that splits the graph evenly,
removing half the nodes from each side and preserving as many connections as
possible. This results in two nearly identical subgraphs with minimal disruption
to the original symmetry.
</reasoning>

<answer>
(9, -9)
</answer>

=========================================

Heuristic Classification Analysis:

{
"heuristics": ["H10", "H2", "H5"],

"primary_heuristic": "H10",

"evidence": {
    "H10": "The structure is highly symmetric and forms complete bipartite graphs
            between groups of 144 variables... preserving as many connections as
            possible... minimal disruption to the original symmetry",
    
    "H2": "The cube that creates the easiest subformulas is the one that splits
           the graph evenly, removing half the nodes from each side...
           This results in two nearly identical subgraphs",
    
    "H5": "The CNF is composed of a large number of clauses...
           The structure is highly symmetric and forms complete bipartite
           graphs between groups of 144 variables"
},

"reasoning_quality": "high",

"notes": "The trace demonstrates sophisticated structural analysis, recognizing
          the bipartite graph structure and symmetry in the formula.
          The primary focus is on exploiting symmetry (H10) to create balanced
          splits (H2). The reasoning shows deep understanding of how graph
          structure relates to SAT solving efficiency.",
}
\end{verbatim}
\end{tcolorbox}

\begin{tcolorbox}[
    colback=gray!5!white,
    colframe=green!75!black,
    fonttitle=\bfseries\small,
    title={Reasoning Trace and Heuristic Classification Example 2},
    breakable,
    sharp corners,
    boxrule=0.5pt,
    left=2mm, right=2mm, top=2mm, bottom=2mm
]
\small
\begin{verbatim}
Reasoning Trace:

<reasoning>
Variable 103 appears in many clauses and is part of several critical subformulas.
Fixing it early simplifies both branches significantly, creating two easy
subformulas.
</reasoning>

<answer>
(103, -103)
</answer>

=========================================

Heuristic Classification Analysis:

{
"heuristics": ["H1", "H8", "H6", "H2"],

"primary_heuristic": "H1",

"evidence": {
    "H1": "Variable 103 appears in many clauses",
    "H8": "is part of several critical subformulas",
    "H6": "Fixing it early simplifies both branches significantly",
    "H2": "creating two easy subformulas"
},

"reasoning_quality": "medium",

"notes": "The trace combines frequency analysis (H1) with critical clause
          identification (H8) and complexity reduction reasoning (H6).
          The mention of 'two easy subformulas' suggests balanced splitting (H2),
          though the reasoning is fairly generic and lacks specific
          quantitative analysis."
}
\end{verbatim}
\end{tcolorbox}
\begin{tcolorbox}[
    colback=gray!5!white,
    colframe=green!75!black,
    fonttitle=\bfseries\small,
    title={Reasoning Trace and Heuristic Classification Example 3},
    breakable,
    sharp corners,
    boxrule=0.5pt,
    left=2mm, right=2mm, top=2mm, bottom=2mm
]
\small
\begin{verbatim}
Reasoning Trace:

<reasoning>
The most frequent variable is 219, appearing in multiple clauses with both
positive and negative signs. Fixing it to true or false would eliminate many
of its occurrences from the problem, reducing complexity significantly.
</reasoning>

<answer>
(219, -219)
</answer>

=========================================

Heuristic Classification Analysis:

{
"heuristics": ["H1", "H3", "H7"],

"primary_heuristic": "H1",

"evidence": {
    "H1": "The most frequent variable is 219, appearing in multiple clauses",
    "H3": "Fixing it to true or false would eliminate many of its occurrences
           from the problem",
    "H7": "appearing in multiple clauses with both positive and negative signs"
},

"reasoning_quality": "medium",

"notes": "The trace primarily uses frequency analysis (H1) to identify variable
          219 as most frequent. It also mentions polarity balance (H7) by noting
          the variable appears in both positive and negative forms, and considers
          clause elimination (H3) as a consequence of the assignment.
          The reasoning is straightforward but lacks depth in analyzing the
          actual impact on formula structure."
}
\end{verbatim}
\end{tcolorbox}
\begin{tcolorbox}[
    colback=gray!5!white,
    colframe=green!75!black,
    fonttitle=\bfseries\small,
    title={Reasoning Trace and Heuristic Classification Example 4},
    breakable,
    sharp corners,
    boxrule=0.5pt,
    left=2mm, right=2mm, top=2mm, bottom=2mm
]
\small
\begin{verbatim}
Reasoning Trace:

<reasoning>
Most clauses have two variables. Clauses with one variable (e.g., -52 0) are
likely to be the easiest to split on, as they only involve a single variable
and not many others.
</reasoning>

<answer>
(52, -52)
</answer>

=========================================

Heuristic Classification Analysis:

{
"heuristics": ["H4", "H9"],

"primary_heuristic": "H4",

"evidence": {
    "H4": "Most clauses have two variables. Clauses with one variable
          (e.g., -52 0) are likely to be the easiest to split on",
    "H9": "Clauses with one variable... are likely to be the easiest to
          split on, as they only involve a single variable"
},
    
"reasoning_quality": "low",

"notes": "The reasoning identifies unit clauses (single-variable clauses) as
          easy splitting targets, which relates to clause-size weighting
          (shorter clauses are more constraining). The logic is somewhat
          confused - unit clauses force assignments rather than being good
          split candidates. The trace shows basic awareness of clause size
          importance but misapplies the concept. Choosing variable 52 from
          a unit clause '-52 0' would immediately satisfy/falsify that clause
          via unit propagation."
}
\end{verbatim}
\end{tcolorbox}
\begin{tcolorbox}[
    colback=gray!5!white,
    colframe=green!75!black,
    fonttitle=\bfseries\small,
    title={Reasoning Trace and Heuristic Classification Example 5},
    breakable,
    sharp corners,
    boxrule=0.5pt,
    left=2mm, right=2mm, top=2mm, bottom=2mm
]
\small
\begin{verbatim}
Reasoning Trace:

<reasoning>
-40 99 -149 -19 -106 0
-45 -95 -88 -102 -22 0
</reasoning>

<answer>
(106, -106)
</answer>

=========================================

Heuristic Classification Analysis:

{
"heuristics": ["H12"],

"primary_heuristic": "H12",

"evidence": {
    "H12": "The trace shows only raw clause-like data '-40 99 -149 -19 -106 0'
            and '-45 -95 -88 -102 -22 0' with no explicit reasoning about
            variable selection"
},

"reasoning_quality": "low",

"notes": "This trace appears to be malformed or represents a compressed/encoded
          format rather than explicit reasoning. The output shows two lines of
          literals terminated by 0 (DIMACS-style clause format), and the answer
          selects variable 106. The selection of 106 may relate to it appearing
          in the first clause as -106, but no reasoning process is articulated.
          This could potentially be H1 (frequency analysis) or H3 (clause
          elimination) but there's insufficient evidence to classify properly."
}
\end{verbatim}
\end{tcolorbox}

\begin{table}[H]
    \centering
    \footnotesize
    \setlength{\tabcolsep}{4pt}
    \caption{Success statistics of different heuristics over 5 attempts on SAT and UNSAT benchmarks, sorted by average total rate. $\sigma$ denotes standard deviation across runs.}
    \label{tab:extended-success-statistics}
    \vspace{0.5em}
    \begin{tabular}{@{}ll ccccc ccccc ccccc cc@{}}
    \toprule
    & & \multicolumn{5}{c}{\textbf{SAT / 50}} & \multicolumn{5}{c}{\textbf{UNSAT / 50}} & \multicolumn{5}{c}{\textbf{Total / 100}} & & \\
    \cmidrule(lr){3-7} \cmidrule(lr){8-12} \cmidrule(lr){13-17}
    & \textbf{Heuristic} & \textbf{1} & \textbf{2} & \textbf{3} & \textbf{4} & \textbf{5} & \textbf{1} & \textbf{2} & \textbf{3} & \textbf{4} & \textbf{5} & \textbf{1} & \textbf{2} & \textbf{3} & \textbf{4} & \textbf{5} & \textbf{Avg} & $\boldsymbol{\sigma}$ \\
    \midrule
    \multirow{6}{*}{\rotatebox[origin=c]{90}{\scriptsize\textit{LLMs}}}
    & Qwen3-4B          & 19 & 24 & 20 & 22 & 21 & 21 & 21 & 22 & 22 & 22 & 40 & 45 & 42 & 44 & 43 & 42.8 & 1.9 \\
    & GPT-OSS-120B      & 21 & 21 & 20 & 20 & 22 & 22 & 22 & 22 & 22 & 22 & 43 & 43 & 42 & 42 & 44 & 42.8 & 0.8 \\
    & Qwen3-32B         & 23 & 21 & 19 & 22 & 20 & 23 & 23 & 21 & 22 & 21 & 46 & 44 & 40 & 44 & 41 & 43.0 & 2.4 \\
    & GPT-OSS-20B       & 21 & 21 & 23 & 21 & 23 & 22 & 22 & 22 & 22 & 22 & 43 & 43 & 45 & 43 & 45 & 43.8 & 1.1 \\
    & Claude-3.7-Sonnet & 23 & 22 & 22 & 21 & 22 & 21 & 23 & 23 & 23 & 23 & 44 & 45 & 45 & 44 & 45 & 44.6 & 0.5 \\
    & Claude-Sonnet-4   & 24 & 26 & 23 & 22 & 24 & 23 & 23 & 23 & 23 & 23 & 47 & 49 & 46 & 45 & 47 & 46.8 & 1.5 \\
    \midrule
    \multirow{5}{*}{\rotatebox[origin=c]{90}{\scriptsize\textit{Symbolic}}}
    & random             & 20 & 22 & 24 & 23 & 24 & 24 & 23 & 23 & 23 & 22 & 44 & 45 & 47 & 46 & 46 & 45.6 & 1.1 \\
    & ternary        & 26 & 26 & 27 & 26 & 27 & 25 & 25 & 25 & 25 & 25 & 51 & 51 & 52 & 51 & 52 & 51.4 & 0.5 \\
    & heule\_schur   & 25 & 27 & 27 & 26 & 27 & 25 & 25 & 25 & 25 & 25 & 50 & 52 & 52 & 51 & 52 & 51.4 & 0.9 \\
    & unit           & 26 & 26 & 27 & 26 & 27 & 26 & 25 & 25 & 25 & 25 & 52 & 51 & 52 & 51 & 52 & 51.6 & 0.5 \\
    & march\_cu      & 26 & 27 & 26 & 27 & 27 & 25 & 25 & 25 & 25 & 25 & 51 & 52 & 51 & 52 & 52 & 51.6 & 0.5 \\
    & heuleu         & 27 & 27 & 27 & 27 & 26 & 25 & 25 & 25 & 25 & 25 & 52 & 52 & 52 & 52 & 51 & 51.8 & 0.4 \\
    \midrule
    \multirow{3}{*}{\rotatebox[origin=c]{90}{\scriptsize\textit{Ours}}}
    & Qwen3-4B-DPO      & 20 & 20 & 19 & 20 & 20 & 22 & 21 & 21 & 23 & 22 & 42 & 41 & 40 & 43 & 42 & 41.6 & 1.1 \\
    & Qwen3-4B-SFT      & 25 & 22 & 21 & 24 & 21 & 24 & 23 & 23 & 24 & 23 & 49 & 45 & 44 & 48 & 44 & 46.0 & 2.3 \\
    & \textbf{Qwen3-4B-SFT-DPO} & 23 & 25 & 24 & 23 & 24 & 23 & 23 & 23 & 25 & 24 & 46 & 48 & 47 & 48 & 48 & \textbf{47.4} & 0.9 \\
    \bottomrule
    \end{tabular}
\end{table}

\subsection{Symbolic Heuristic Definitions}~\label{subsec:symbolic-heuristics}

We compare our learned heuristics against several symbolic lookahead heuristics implemented in Z3~\citep{de2008z3}. These heuristics determine which literal to select for cubing by computing reward scores based on different structural properties of the formula.

\subsubsection{Ternary}
The \texttt{ternary} heuristic is a reward function optimized for random 3-SAT instances, originally used by Heule and Knuth in the March solver. It focuses on the structure of ternary (3-literal) clauses to guide splitting decisions.

\subsubsection{Heule-Schur}
The \texttt{heule\_schur} heuristic is based on the ``Schur Number 5'' work by Heule (AAAI 2018), a combinatorial / Ramsey-theory construction problem. The score of a literal $\ell$ is computed as:
\begin{equation}
    \text{score}(\ell) = \sum_{C \in \mathcal{C} : \ell \in C} 2^{-|C|+1} \cdot \frac{\sum_{\ell' \in C, \ell' \neq \ell} \text{occs}(\neg \ell')}{|C|}
\end{equation}
where $\mathcal{C}$ is the set of all clauses, $|C|$ is the size of clause $C$, and $\text{occs}(\ell)$ denotes the number of clauses containing literal $\ell$.

\subsubsection{Heule-Unit (heuleu)}
The \texttt{heuleu} heuristic uses a simplified scoring function:
\begin{equation}
    \text{score}(\ell) = \sum_{C \in \mathcal{C} : \ell \in C} 2^{-|C|+1}
\end{equation}
This variant focuses purely on clause sizes without considering literal occurrences; like \texttt{heule\_schur} it descends from the Schur-number line of combinatorial benchmarks.

\subsubsection{Unit}
The \texttt{unit} heuristic extends \texttt{heule\_schur} by additionally counting the number of unit clauses that would be created by the assignment. This provides additional weight to decisions that lead to immediate simplifications through unit propagation. It inherits \texttt{heule\_schur}'s combinatorial lineage.

\subsubsection{March-CU}
The \texttt{march\_cu} heuristic is the default reward function used in a version of the March solver, which has historically been applied to both hard combinatorial problems (Schur-5, Pythagorean triples, van der Waerden) and hard random 3-SAT. It represents a well-established lookahead baseline.

\textbf{Mix-Diff Variants.} Each reward function comes with its own variant of ``mix\_diff'', which is the function for combining reward metrics for the positive and negative variants of a literal. This ensures balanced consideration of both polarities when making splitting decisions.

\end{document}